\title{% Structured Prediction via 
  Factor Graph Neural Network}
\author{%
  Zhen Zhang\\
  Department of Computer Science\\
  National University of Singapore\\
  \texttt{zhangz@comp.nus.edu.sg}\\
  \And
  Fan Wu\thanks{Work was done during the visiting at Department of
    Computer Science, National University of Singapore.}\\
  Department of Computer Science\\
  Nanjing University,\\
  \texttt{fan01172000@gmail.com}
  \And
  Wee Sun Lee\\
  Department of Computer Science\\
  National University of Singapore\\
  \texttt{leews@comp.nus.edu.sg}\\
  % David S.~Hippocampus\thanks{Use footnote for providing further information
  % about author (webpage, alternative address)---\emph{not} for acknowledging
  % funding agencies.} \\
  % Department of Computer Science\\
  % Cranberry-Lemon University\\
  % Pittsburgh, PA 15213 \\
  % \texttt{hippo@cs.cranberry-lemon.edu} \\
  % examples of more authors
  % \And
  % Coauthor \\
  % Affiliation \\
  % Address \\
  % \texttt{email} \\
  % \AND
  % Coauthor \\
  % Affiliation \\
  % Address \\
  % \texttt{email} \\
  % \And
  % Coauthor \\
  % Affiliation \\
  % Address \\
  % \texttt{email} \\
  % \And
  % Coauthor \\
  % Affiliation \\
  % Address \\
  % \texttt{email} \\
}
\begin{document}

\maketitle

\begin{abstract}
Most of the successful deep neural network architectures are structured, often consisting of elements like convolutional neural networks and gated recurrent neural networks. Recently, graph neural networks have been successfully applied to graph structured data such as point cloud and molecular data. These networks often only consider pairwise dependencies, as they operate on a graph structure. We generalize the graph neural network into a factor graph neural network (FGNN) in order to capture higher order dependencies. We show that FGNN is able to represent Max-Product Belief Propagation, an approximate inference algorithm on probabilistic graphical models; hence it is able to do well when Max-Product does well. Promising results on both synthetic and real datasets demonstrate the effectiveness of the proposed model. 
\end{abstract}

\section{Introduction}
\label{sec:introduction}
Deep neural networks are powerful approximators that have been extremely successful in practice. While fully connected networks are universal approximators, successful networks in practice tend to be structured, e.g. convolutional neural networks and gated recurrent neural networks such as LSTM and GRU. Convolutional neural networks capture spatial or temporal correlation of neighbouring inputs while recurrent neural networks capture temporal information, retaining information from earlier parts of a sequence. Graph neural networks (see e.g. \citep{gilmer2017neural,xu2018powerful}) have recently been successfully used with graph structured data to capture pairwise dependencies between variables and to propagate the information to the entire graph. 

Real world data often have higher order dependencies, e.g. atoms satisfy valency constraints on the number of bonds that they can make in a molecule. In this paper, we show that the graph neural network can be extended in a natural way to capture higher order dependencies through the use of the factor graph structure. A factor graph is a bipartite graph with a set of variable nodes connected to a set of factor nodes; each factor node in the graph indicates the presence of dependencies among the variables it is connected to. We call the neural network formed from the factor graph a factor graph neural network (FGNN).

Factor graphs have been used extensively for specifying Probabilistic Graph Models (PGM) which can be used to model dependencies among random variables. %\citep{ratliff2007approximate,daume2005learning,kulesza2008structured}. 
Unfortunately, PGMs suffer from scalability issues as inference in PGMs often
require solving NP-hard problems. Once a PGM has been specified or learned, an approximate inference algorithm, e.g. the Sum-Product or Max-Product Belief Propagation, is often used to infer the values of the target variables (see e.g. \citep{koller2009probabilistic}). Unlike PGMs which usually specify the semantics of the variables being modeled as well as the approximate algorithm being used for inference, graph neural networks usually learn a set of latent variables as well as the inference procedure at the same time from data, normally in an end-to-end manner; the graph structure only provides information on the dependencies along which information propagates. For problems where domain knowledge is weak, or where approximate inference algorithms do poorly, being able to learn an inference algorithm jointly with the latent variables, specifically for the target data distribution, often produces superior results.   

We take the approach of jointly learning the algorithm and latent variables in developing the factor graph neural network. The FGNN is defined using two types of modules, the Variable-to-Factor (VF) module and the Factor-to-Variable (FV) module, as shown in Figure~\ref{fig:FGNN}. These modules are combined into a layer, and the layers can be stacked together into an algorithm. We show that the FGNN is able to exactly parameterize the Max-Product Belief Propagation algorithm, which is widely used in finding approximate \map (MAP) assignment of a PGM. Thus, for situations where belief propagation gives best solutions, the FGNN can mimic the belief propagation procedure. In other cases, doing end-to-end learning of the latent variables and the message passing transformations at each layer may result in a better algorithm. Furthermore, we also show that in some special cases, FGNN can be transformed to a particular graph neural network structure, allowing simpler implementation.

We evaluate our model on a synthetic problem with constraints on the number of elements that may be present in subsets of variables to study the strengths of the approach. We then apply the method to two real 3D point cloud problems. We achieve state-of-the-art result on one problem while being competitive on the other. The promising results show the effectiveness of the proposed algorithm.

\begin{figure}[t]
  \centering
  \includegraphics[width=0.94\textwidth]{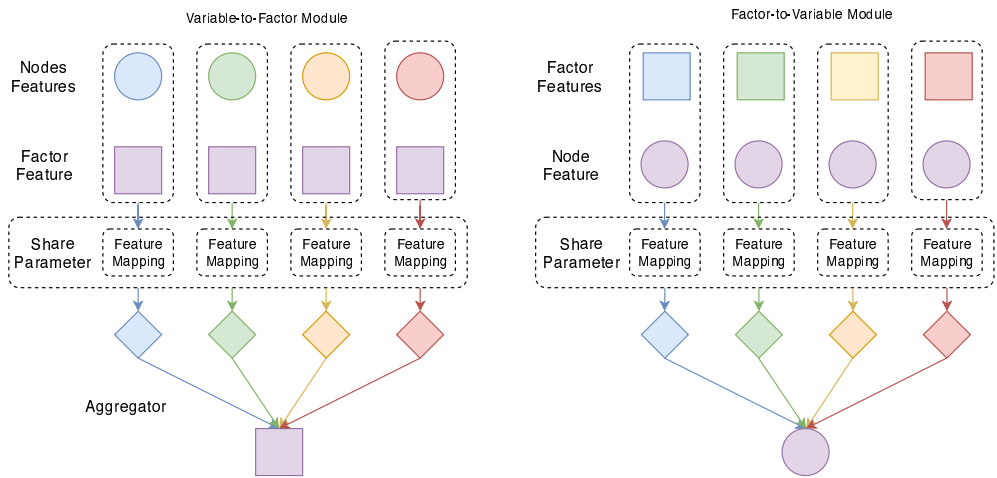}
  
  % \begin{subequations}
  %   \begin{equation}
  %     \label{eq:2}
  %     \textbf{Bottom-Up: } \tilde \Fcal_h = \Agg_{i\in h} \Bcal([\Fcal_h, f_i ]), 
  %   \end{equation}

  %   \begin{equation}
  %     \label{eq:1}
  %     \textbf{Top-down: } \tilde f_i = \Agg_{i \in h}\Tcal([\ \Fcal_h, f_i])
  %   \end{equation}
  % \end{subequations}
  \caption{\small The structure of the Factor Graph Neural
    Network (FGNN): the Variable-to-Factor (VF) module is shown on the left while the Factor-to-Variable (FV) module is shown on the right. The VF module will let the factor to collect information from the variable nodes, and the FV module will let the nodes to receive information from their parent factors. The VF and FV modules have similar structure but different parameters. A FGNN layer usually consists of a VF layer consisting of a collection of VF modules followed by a FV layer consisting of a collection of FV modules. }  \label{fig:FGNN}
\end{figure}

%The proposed FGNN is able to exactly parameterize the max-product belief
%propagation algorithm, which is widely used in finding approximate
%\map (MAP) assignment of a graphical model. Thus for situations where
%belief propagation gives best solutions, our FGNN can mimic the
%belief propagation procedure, while in other cases through end-to-end
%learning, it may achieve superior results. We also show that for
%several specific graph structures, 
%the factors nodes in FGNN can be shrank
%into normal nodes, 
%the FGNN can be efficiently implemented as normal MPNN, 
%and thus we are able to achieve the same results
%with a simplified module. 
% With the two module,
% we show that a group of belief propagation algorithms on factor graph
% can be exactly parameterized by our factorized graph neural
% network. 

\section{Background}
\label{sec:background}
\subsection{Probabilistic Graphical Model and MAP Inference}
Probabilistic Graph Models (PGMs) use graph structures to model dependencies between random
variables. These dependencies are conveniently represented using a factor graph, which is a bipartite graph $\Gcal = (\Vcal,\Ccal,\Ecal)$ where each vertex $i\in\Vcal$ in the graph is associated with a random variable $x_i$, each vertex $c\in \Ccal$ is associated with a function $f_c$ and there is an edge between variable vertex $i$ and function vertex $c$ if $f_c$ depends on variable $x_i$. \begin{wrapfigure}{r}{.4\textwidth}
	\centering
	\begin{tabular}{c c}
		\includegraphics[width=0.5\linewidth]{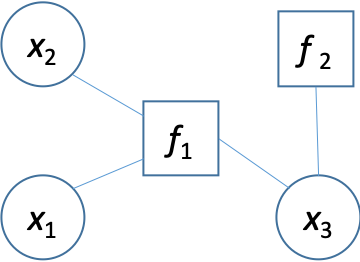}
	\end{tabular}
	\caption{ A factor graph where $f_1$ depends on $x_1$, $x_2$, and $x_3$ while $f_2$ depends on $x_3$.
	}
	\label{fig:FG}
		\vspace*{-15pt}
\end{wrapfigure}

Let $\xb$ represent the set of all variables and let $\xb_c$ represent the subset of variables that $f_c$ depends on. Denote the set of indices of variables in $\xb_c$ by $s(c)$. We consider  probabilistic models for discrete random variables of the form
%defined on hypergraphs of the form $\Gcal=(\Vcal, \Ccal)$, with vertices $\Vcal=\{1,2,\ldots, n\}$ and hyperedges $\Ccal\in 2^{\Vcal}$ which are collection of subsets of $\Ccal$, there is an
%corresponding probabilistic graphical model defined as follows
\begin{align}
  \label{eq:3}
  p(\xb) = \frac{1}{Z}\exp\left[\sum_{c\in\Ccal}\theta_c(\xb_c
  ) +\sum_{i\in \Vcal}\theta_i(x_i) \right],
\end{align}
where $\exp(\theta_c(\cdot))$, $\exp(\theta_i(\cdot))$ are positive functions called potential functions (with $\theta_c(\cdot)$, $\theta_i(\cdot)$ as the corresponding log-potential functions) and $Z$ is a normalizing constant. 

The goal of MAP inference \citep{koller2009probabilistic} is to find the
assignment which maximizes $p(\xb)$, that is 
\begin{align}
  \label{eq:map_def}
  \xb^{\ast} = \argmax_{\xb}\sum_{c\in\Ccal}\theta_c(\xb_c
  ) +\sum_{i\in \Vcal}\theta_i(x_i).
\end{align}
The combinatorial optimization problem \eqref{eq:map_def} is NP-hard in general,
and thus it is often solved using approximate methods. One common method is 
Max-Product Belief Propagation, which is an iterative method formulated as
\begin{align}
  \label{eq:max_product}
  b_i(\xb_i) = \theta_i(x_i) + \sum_{c: i \in s(c)}m_{c\rightarrow
  i}(x_i); \quad 
  m_{c\rightarrow
  i}(x_i) =  \max_{\hat \xb_c: \hat x_i = x_i}\bigg[\theta_c(\hat \xb_c) +
  \sum_{i'\in s(c), i'\neq i}b_{i'}(\hat x_{i'})\bigg].
\end{align}
Max-product type algorithms are fairly effective in practice, achieving moderate accuracy in
various problems \citep{weiss2001optimality, felzenszwalb2006efficient,globerson2008fixing}. 
\subsection{Related Works} 
Various graph neural network models have been proposed for graph structured data. These include methods based  on the graph Laplacian \citep{bruna2013spectral, defferrard2016convolutional,kipf2016semi}, using gated networks \citep{li2015gated}, and using various other neural networks structures for updating the information \citep{duvenaud2015convolutional,battaglia2016interaction,kearnes2016molecular,schutt2017quantum}. In \citep{gilmer2017neural}, it was shown that these methods can be viewed as performing message passing on pairwise graphs and can be generalized to a Message Passing Neural
Network (MPNN) architecture. In this work, we seek to go beyond pairwise interactions by using message passing on factor graphs.

The PointNet \citep{qi2017pointnet} provides permutation invariant functions on a set of points instead of a graph. It propagates information from all nodes to a global feature vector, and allows new node features to be generated by appending the global feature to each node. The main benefit of
PointNet is its ability to easily capture global information. However, due to
a lack of local information exchange, it may lose the ability to represent local details.

Several works have applied graph neural networks to point cloud data, including the EdgeConv method
\citep{dgcnn} and Point Convolutional Neural Network (PointCNN)
\citep{li2018pointcnn}. We compare our work with these methods in the
experiments.  Besides applications to the molecular problems
\citep{battaglia2016interaction,duvenaud2015convolutional,gilmer2017neural,kearnes2016molecular,li2015gated,schutt2017quantum}
graph neural networks have also been applied to many other problem
domains such as combinatorial optimization \citep{khalil2017learning},
point cloud processing \citep{li2018pointcnn,dgcnn} and binary code similarity detection \citep{xu2017neural}.

% \paragraph{Edge Convolution}

% \paragraph{Message Passing Neural Network}

% \paragraph{Point Convolutional Neural Network}

\section{Factor Graph Neural Network}
\label{sec:methodology}
Previous works on graph neural networks focus on
learning pairwise information exchanges. The Message Passing Neural Network (MPNN) \citep{gilmer2017neural} provides a framework for deriving different graph neural network algorithms by modifying the message passing operations.  We aim at enabling the network to efficiently encode higher order features and to propagate information between higher order factors and the nodes by performing message passing on a factor graph. We describe the FGNN network and show that for specific settings of the network parameters we obtain the Max-Product Belief Propagation algorithm. Finally, we show that for certain special factor graph structures, FGNN can be represented as a pairwise graph neural network, allowing simpler implementation.

\subsection{Factor Graph Neural Network}
\label{sec:fact-mess-pass}
First we give a brief
introduction to the Message Passing Neural Network (MPNN), and then we
propose one MPNN architecture which can be easily extended to a
factor graph version.

Given a graph $\Gcal =
(\Vcal, \Ncal)$, where $\Vcal$ is a set of nodes and $\Ncal: \Vcal
\mapsto 2 ^{\Vcal}$ is the adjacency list, assume that each node is
associated with a feature vector $\fb_i$ and  each edge $(i,j)$ with $i\in \Vcal$ and 
$j\in\Ncal(i)$ is associated with an edge feature vector $\eb_{ij}$. 
Then a message passing neural network layer is defined in \citep{gilmer2017neural} as 
\begin{equation}
    \mb_{i} = \sum_{j \in \Ncal(i)}\Mcal(\fb_i, \fb_j,
    \eb_{ij}), \qquad \tilde \fb_{i} = \Ucal_t(\fb_i, \mb_i), \label{eq:update}
\end{equation}
where $\Mcal$ and $\Ucal$ are usually parameterized by neural networks. 
The summation in \eqref{eq:update} can be replaced with other
aggregation function such as maximization \citep{dgcnn}. The main
reason to replace summation is that summation may be corrupted by a
single outlier, while the maximization operation is more robust. Thus
in our paper we also choose to use the maximization as aggregation function.

There are also multiple choices of the architecture of $\Mcal$ and
$\Ucal$. In our paper, we propose a MPNN architecture as follows
\begin{align}
  \label{eq:MPNN_spec}
  \tilde \fb_{i} = \max_{j \in \Ncal(i)}\Qcal(\eb_{ij})\Mcal(\fb_i, \fb_j)
  ,
\end{align}
where $\Mcal$ maps feature vectors to a length-$n$ feature vector,
and $\Qcal(\eb_{ij})$ maps $\eb_{ij}$ to a $m\times n$ weight
vector. Then by matrix multiplication and aggregation a new feature of
length $m$ can be generated. % In the following sections, we will show
% that the \eqref{eq:MPNN_spec} can be easily extended to a factorized
% version by adding extra factor nodes. Furthermore, we will show that
% for several specific graph structure, the extra factor nodes can be
% shrank and thus we are able to use \eqref{eq:MPNN_spec} to encode
% and propagate higher order information efficiently.

The MPNN encodes unary and pairwise edge features, but higher order features are not directly encoded. Thus we extend the
MPNN by introducing extra factor nodes.  Given a factor graph
$\Gcal = (\Vcal,\Ccal,\Ecal)$, a group of unary features $[\fb_i]_{i \in
  \Vcal}$ and a group of factor features $[\gb_c]_{c\in\Ccal}$, assume
that for each edge $(c, i)\in \Ecal$, with $c\in\Ccal, i \in \Vcal$, there is an associated edge feature vector $[t_{ci}]$. Then, the Factor Graph Neural
Network layer on $\Gcal$ can be extended from \eqref{eq:MPNN_spec} as
shown in Figure \ref{fig:FGNN_detail}. 
% then one Factorized Message Passing Layers is defined as follows,

\begin{figure}[H]
  \centering
    \adjustbox{valign=c}{\begin{minipage}[t]{0.50\linewidth}
    \begin{algorithm}[H]
      \SetAlgoLined
      \KwData{$\Gcal = (\Vcal,\Ccal,\Ecal)$,  $[\fb_i]_{i \in
          \Vcal}$, $[\gb_c]_{c\in\Ccal}$ and $[t_{ci}]_{(c,i) \in \Ecal}$  }
      \KwResult{$[\tilde \fb_i]_{i \in
          \Vcal}$, $[\tilde \gb_c]_{c\in\Ccal}$ }
      \textbf{Variable-to-Factor:}\\
      $\tilde \gb_c = \Agg\limits_{i:(c,i) \in \Ecal}
      \Qcal(\tb_{ci}|\Phi_{\text{VF}})\Mcal([\gb_c, f_i ]|\Theta_{\text{VF}})$\;
      \textbf{Factor-to-Variable:} \\
      $\tilde f_i = \Agg\limits_{c:(c,i) \in \Ecal}\Qcal(\tb_{ci}|\Phi_{\text{FV}})\Mcal([\ \gb_c,
      f_i]|\Theta_{\text{FV}})$\;
      \caption{\small The FGNN layer}
    \end{algorithm}
  \end{minipage}}
\hfill
  \adjustbox{valign=c}{\begin{minipage}[t]{0.47\linewidth}
    \centering
    \includegraphics[width=\textwidth]{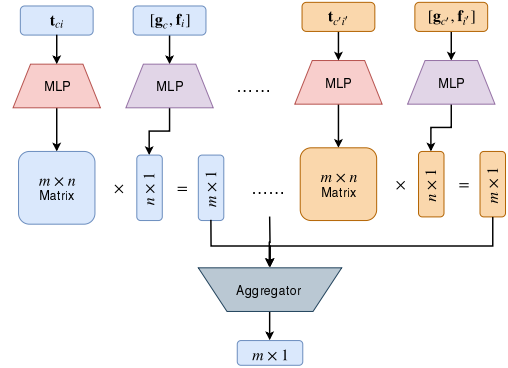}    
  \end{minipage}}

  \caption{ The Factor Graph Neural
    Network. \textbf{Left: } The pseudo code for the FGNN
    layer. We can see that the Variable-to-Factor
  module and the Factor-to-Variable modules are MPNN layers with similar
  structure but different parameters. \textbf{Right: } The detailed
  architecture for our Variable-to-Factor or Factor-to-Variable
  module.}\label{fig:FGNN_detail}
\end{figure}
% Through the proposed FGNN module, the higher order information can be
% efficiently encoded and propagated.

\subsection{FGNN for Max-Product Belief Propagation}
\label{sec:belief-propagation}
MAP inference over the PGM is NP-hard in general, and thus it is often
approximately solved by the Max-Product Belief Propagation method. In
this section, we will prove that the Max-Product Belief Propagation can
be exactly parameterized by the FGNN. The sketch of the proof is as
follows. First we show that arbitrary higher order potentials can be
decomposed as maximization over a set of rank-1 tensors, and that the decomposition can be represented by a FGNN layer. After the decomposition, a single Max-Product Belief Propagation iteration only requires two operations: (1) maximization over rows or columns of a matrix, and (2) summation over a group of features. We show that the two operations can be exactly parameterized by the FGNN and that $k$ Max-Product iterations can be simulated using $k$ FGNN layers plus a linear layer at the end.

In the worst case, the size of a potential function grows exponentially with the number of variables that it depends on. In such cases, the size of the FGNN produced by our construction will correspondingly grow exponentially. However, if the potential functions can be well approximated as the maximum of a moderate number of rank-1 tensors, the corresponding FGNN will also be of moderate size. In practice, the potential functions may be unknown and only features of the of the factor nodes are provided; FGNN can learn the approximation from data, potentially exploiting regularities such as low rank approximations if they exist.

\paragraph{Tensor Decomposition} For discrete variables $x_1,\ldots,x_n$, a rank-1 tensor is a product of univariate functions of the variables $\prod_{i=1}^n\phi_i(x_i)$. A tensor can always be decomposed as a finite sum of rank-1 tensors \cite{kolda2009tensor}. This has been used to represent potential functions, e.g. in \citep{pmlr-v70-wrigley17a}, in conjunction with sum-product type inference algorithms. For max-product type algorithms, a decomposition as a maximum of a finite number of rank-1 tensors is more appropriate. It has been shown in  \citep{kohli2010energy} (as stated next) that there is always a finite decomposition of this type. 
    
\begin{lemma}[\citep{kohli2010energy}]
  Given an arbitrary potential function $\phi_c(\mathbf{x}_c)$, there
  exists a variable $z_c$ which takes a finite number of values and a set of univariate potentials associated with each value of $z_c$, $\{\phi_{ic}(x_i,
  z_c)|x_i\in \xb_c\}$, s.t.
  \begin{equation}
    \label{eq:lemma1}
    \log \phi_c(\mathbf{x}_c) = \log \max_{z_c}\prod_{i\in s(c)}\phi_{ic}(x_i,
    z_c)= \max_{z_c}\sum_{i\in s(c)}\log\phi_{ic}(x_i,
    z_c)
  \end{equation}
\end{lemma}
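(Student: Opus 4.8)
The plan is to give an explicit construction of the auxiliary variable $z_c$ and the univariate potentials $\phi_{ic}$, since the lemma only asserts the \emph{existence} of a finite decomposition and says nothing about its size. First I would let $z_c$ range over the entire joint configuration space $\prod_{i\in s(c)}\mathrm{dom}(x_i)$; because each $x_i$ is discrete with a finite domain, this product is finite, so $z_c$ takes only finitely many values, which is the first thing the statement requires. I will write a generic value of $z_c$ as a configuration $\hat{\mathbf{x}}_c=(\hat x_i)_{i\in s(c)}$, thinking of $z_c$ as a ``guess'' of the joint assignment.

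Next I would design the univariate potentials so that the term indexed by $z_c=\hat{\mathbf{x}}_c$ contributes the true potential value exactly when the query $\mathbf{x}_c$ agrees with the guess, and is suppressed otherwise. Fixing one distinguished index $i_0\in s(c)$, I set $\log\phi_{i_0 c}(x_{i_0},z_c)=\log\phi_c(\hat{\mathbf{x}}_c)$ when $x_{i_0}=\hat x_{i_0}$ and $-\infty$ otherwise, and for every other $i\in s(c)$ I set $\log\phi_{ic}(x_i,z_c)=0$ when $x_i=\hat x_i$ and $-\infty$ otherwise (equivalently, $\phi_{ic}$ is an indicator potential taking the value $0$ off the diagonal). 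With this choice, for a fixed query $\mathbf{x}_c$ the inner sum $\sum_{i\in s(c)}\log\phi_{ic}(x_i,z_c)$ equals $\log\phi_c(\hat{\mathbf{x}}_c)$ precisely when $\hat{\mathbf{x}}_c=\mathbf{x}_c$, and equals $-\infty$ for every other guess. Taking the maximum over $z_c$ then isolates the unique surviving term, yielding $\max_{z_c}\sum_{i\in s(c)}\log\phi_{ic}(x_i,z_c)=\log\phi_c(\mathbf{x}_c)$, which is exactly the claimed identity.

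The main obstacle I anticipate is handling the $-\infty$ entries within a framework that insists the potential functions be strictly positive. I expect to resolve this in one of two ways. Either I permit the univariate potentials to take the value $0$, so that their logarithms are $-\infty$; this is harmless because such terms are always discarded by the maximization and never enter any subsequent sum. Or, if strict positivity must be maintained, I replace each $-\infty$ by a finite constant $-M$ and argue that, since the domain is finite, $\log\phi_c$ is bounded, say between $L$ and $U$; choosing any $M>U-L$ then forces every mismatched guess to fall below the correct term for all queries simultaneously, so the maximization still selects $z_c=\mathbf{x}_c$ and returns \emph{exactly} $\log\phi_c(\mathbf{x}_c)$. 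Everything else is bookkeeping; no optimization or approximation is involved. The construction also makes the worst-case behavior transparent: the range of $z_c$ is the full product of the variable domains, which is exponential in $|s(c)|$, matching the blow-up discussed in the surrounding text, while leaving open the possibility of far smaller decompositions when $\phi_c$ admits a low max-rank structure.
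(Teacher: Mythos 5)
Your construction is correct and is essentially the same one the paper uses (inside the proof of Proposition~\ref{propos:decomposition}, Eq.~\eqref{eq:decomposition_to_node}): let $z_c$ enumerate the joint configurations of $\xb_c$ and penalize mismatched coordinates with a sufficiently large negative constant so that only the agreeing configuration survives the maximization. The only cosmetic difference is that you place the entire value $\log\phi_c(\hat{\mathbf{x}}_c)$ on one distinguished index $i_0$, whereas the paper spreads it as $\frac{1}{|s(c)|}\log\phi_c(\xb_c^{z_c})$ across all coordinates; both yield the exact identity, and your finite-$M$ argument correctly handles the positivity requirement on the potentials.
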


Using ideas from \citep{kohli2010energy}, we first show that a PGM with tabular potential functions that can be converted into single layer FGNN with the non-unary potential functions represented as the maximum of a finite number of rank-1 tensors.

\begin{proposition} \label{propos:decomposition}
  A factor graph $\Gcal = (\Vcal,\Ccal,\Ecal)$ with variable log potentials $\theta_i(x_i)$ and 
  factor log potentials $\log\phi_c(\xb_c)$ can be converted into a factor graph $\Gcal'$ with the same variable potentials and the corresponding decomposed factor log-potentials $\log \phi_{ic}(x_{i}, z_c)$ using a one-layer FGNN. 
\end{proposition}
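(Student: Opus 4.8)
The plan is to exhibit explicit parameters for the Variable-to-Factor module of a single FGNN layer that realize the decomposition guaranteed by the preceding Lemma of \citep{kohli2010energy}, while leaving the variable features (and hence the variable potentials $\theta_i$) untouched. First I would pin down the explicit form of the decomposition: let the auxiliary variable $z_c$ range over the joint configurations $\hat\xb_c = (\hat x_i)_{i\in s(c)}$ of the variables attached to factor $c$, and set
\[
  \log\phi_{ic}(x_i, z_c) = \tfrac{1}{|s(c)|}\,\log\phi_c(\hat\xb_c)\,\mathds{1}[x_i = \hat x_i] \;-\; M\,\mathds{1}[x_i \neq \hat x_i],
\]
where $\hat x_i$ is the value assigned to $x_i$ by the configuration $z_c$ and $M$ is a large penalty (formally $+\infty$, so that the penalised entries equal $-\infty$ as in the Lemma). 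A one-line check gives $\max_{z_c}\sum_{i\in s(c)}\log\phi_{ic}(x_i, z_c) = \log\phi_c(\xb_c)$: the optimal $z_c$ is the configuration equal to $\xb_c$, for which every indicator $\mathds{1}[x_i=\hat x_i]$ fires and no penalty is incurred, whereas any other $z_c$ pays at least one $-M$. The point of writing it this way is that each $\log\phi_{ic}$ is an \emph{affine selection} of the entries of the tabular potential $\log\phi_c$.

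The key observation is that this map from the table $\log\phi_c$ to the family $\{\log\phi_{ic}\}_{i\in s(c)}$ is affine in the table entries and depends on the edge $(c,i)$ only through which variable slot $i$ occupies inside $c$. I would therefore store the flattened table $\log\phi_c$ in the input factor feature $\gb_c$ and encode the slot index of $i$ in the edge feature $\tb_{ci}$. I would then choose $\Mcal([\gb_c, f_i]\mid\Theta_{\text{VF}})$ to be a linear map that simply exposes the (scaled) table entries of $\gb_c$ while ignoring $f_i$, and choose the weight generator $\Qcal(\tb_{ci}\mid\Phi_{\text{VF}})$ to be the selection matrix that copies $\tfrac{1}{|s(c)|}\log\phi_c(\hat\xb_c)$ into the $(x_i, z_c)$ entry assigned to slot $i$ whenever $x_i=\hat x_i$. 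Since the required maps are affine, they are represented \emph{exactly} (no approximation needed) by linear $\Qcal$ and $\Mcal$; the constant $-M$ offset is produced by appending a constant coordinate to the output of $\Mcal$ and letting $\Qcal$ route $-M$ into the masked positions.

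Finally I would handle the aggregation. The update $\tilde\gb_c = \Agg_{i:(c,i)\in\Ecal}\Qcal(\tb_{ci})\Mcal([\gb_c, f_i])$ takes a coordinatewise maximum over the neighbours $i$ of $c$, whereas the decomposition must emit a \emph{separate} block $\log\phi_{ic}$ for each such $i$. I would lay out the output coordinates of $\tilde\gb_c$ as one block per slot, and have $\Qcal(\tb_{ci})$ write the correct values into slot $i$'s block while writing $-M$ into every other block; the $\max$ then recovers, block by block, exactly the single relevant edge's contribution, so $\tilde\gb_c$ stores the full decomposition $\{\log\phi_{ic}\}_{i\in s(c)}$. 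The variable potentials are carried through unchanged by setting the Factor-to-Variable step of the layer to the identity on $f_i$, which yields the claimed converted graph $\Gcal'$.

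I expect the main obstacle to be this last reconciliation: matching the per-edge nature of the decomposed potentials against the module's single aggregated output $\tilde\gb_c$ over all incident variables. The masking-by-$-M$ construction resolves it, but it requires a careful coordinate layout and, for a fully exact statement, treating the off-diagonal $-\infty$ entries symbolically (in the extended reals) rather than as the finite constant $M$ that one would use in practice; this is the only place where the exact construction and a concrete implementation diverge.
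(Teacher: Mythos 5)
Your proposal is correct and follows essentially the same route as the paper: the same Kohli-style decomposition with $\tfrac{1}{|s(c)|}$-scaled table entries and large negative penalties for mismatched states, linear $\Qcal$/$\Mcal$ maps acting as per-edge selection matrices on the flattened table stored in $\gb_c$, and a block-per-neighbour output layout so that max-aggregation recovers the concatenated $\log\phi_{ic}$. The only cosmetic difference is that you pad foreign blocks with $-M$ where the paper pads with zeros after arranging for non-negative features (its Lemma 2), and your closing worry about needing $-\infty$ is unfounded: any finite penalty exceeding $\max_{\xb_c}\theta_c(\xb_c)$ already makes the decomposition exact, which is precisely the finite constant $c_{x_i,z_c}$ the paper uses.
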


The proof of Proposition~\ref{propos:decomposition} and the following
two propositions can be found in the supplementary material. With the decomposed higher order potential, one iteration of of the Max Product algorithm \eqref{eq:max_product} can be rewritten using the following two equations:
\begin{subequations}
  \begin{align}
  \label{eq:to_z}
%   b_h(z) = & \sum_{i\in h}\max_{x_i}\left[\phi_i(x_i, z) +
%              b_i(x_i)\right], \\
%   \label{eq:to_x}
%   b_i(x_i) = & \theta_i(x_i)  + \sum_{h: i\in h}\max_z[b_h(z) +
%               \phi_i(x_i, z) - \max_{x_i}[\phi_i(x_i, z) + ]].\\
    b_{c\rightarrow i}(z_c)& =  \sum_{i'\in s(c),i'\neq i}\max_{x_i'}\left[\log\phi_{i'c}(x_{i'}, z_c) + b_{i'}(x_{i'})\right], \\ \label{eq:to_x}
  b_i(x_i)& =  \theta_i(x_i)  + \sum_{c: i\in s(c)}\max_z[\log\phi_{ic}(x_i, z_c)+ b_{c\rightarrow i}(z_c)].
\end{align}\label{eq:decomposed}
\end{subequations}
Given the log potentials represented as a set of rank-1 tensors at each factor node, we show that each iteration of the Max-Product message passing update can be represented by a Variable-to-Factor (VF) layer and a Factor-to-Variable (FV) layer, forming a FGNN layer, followed by a linear layer (that can be absorbed into the VF layer for the next iteration).

With decomposed log-potentials, belief propagation only requires two operations: (1) maximization over rows or columns of a matrix; (2) summation over a group of features. 

We first show that the maximization operation in (\ref{eq:to_z}) (producing max-marginals) can be done using neural networks that can be implemented by the $\Mcal$ units in the VF layer.

\begin{proposition}\label{propos:matrix_max}
  For arbitrary real valued feature
  matrix $\mathbf{X} \in \mathbb{R}^{m\times n}$ with $x_{ij}$ as its
  entry in the $i^{\text{th}}$ row and $j^{\text{th}}$ column, the feature mapping operation
  $
  \hat \xb = [\max_{j}x_{ij}]_{i=1}^m
  $
  can be exactly parameterized with a 2$\log_2 n$-layer neural network with Relu as activation function and at most $2n$ hidden units. 
\end{proposition}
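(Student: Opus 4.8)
The plan is to realize the row-wise maximum by a balanced binary tournament. Since $\max_j x_{ij}$ is computed independently for each row $i$, it suffices to build a ReLU subnetwork that outputs the maximum of a single length-$n$ vector and then apply the identical (block-diagonal) construction to all $m$ rows in parallel; the width bound $2n$ refers to this per-row subnetwork. Assuming $n=2^k$ (otherwise pad the row by repeating an existing entry, which leaves the maximum unchanged), I would organize the computation into $\log_2 n$ tournament levels, each level pairing up the current running maxima and replacing each pair by its maximum. The single arithmetic fact I will exploit is
\begin{equation}
\max(a,b)=\tfrac12\big[(a+b)+|a-b|\big],\qquad |t|=\mathrm{ReLU}(t)+\mathrm{ReLU}(-t),
\end{equation}
which expresses a pairwise maximum exactly through ReLU units and affine maps.

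Each tournament level is implemented by exactly two ReLU layers, which yields the claimed depth $2\log_2 n$. To make the levels compose, I would maintain the invariant that every running maximum $v$ entering a level is stored in split form as the pair $(\mathrm{ReLU}(v),\mathrm{ReLU}(-v))$, so that $v$ itself is a fixed linear combination of stored nonnegative quantities. Given two such values $p,q$, the first layer of the level computes the four units $\mathrm{ReLU}(p+q)$, $\mathrm{ReLU}(-(p+q))$, $\mathrm{ReLU}(p-q)$, $\mathrm{ReLU}(q-p)$, where $p\pm q$ are linear in the incoming split coordinates, and the second layer forms the linear combination $w=\tfrac12[(p+q)+|p-q|]$ and emits $\mathrm{ReLU}(w),\mathrm{ReLU}(-w)$, restoring the invariant for the next level. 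Counting widths: level $\ell$ has $n/2^\ell$ pairs, so its first layer uses $4\cdot n/2^\ell$ units and its second layer $2\cdot n/2^\ell$ units; both are maximized at $\ell=1$, giving a worst-case layer width of $4\cdot(n/2)=2n$, exactly the stated bound. After the final level a single affine read-out $w=\mathrm{ReLU}(w)-\mathrm{ReLU}(-w)$ returns $\max_j x_{ij}$.

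I expect the crux of the argument to be this bookkeeping rather than any one identity: because intermediate maxima can be negative, they cannot be passed verbatim through a ReLU layer, so the split representation (and the verification that $p\pm q$ stay affine in the stored coordinates at every level) is what keeps each pairwise max realizable in two layers while holding the per-layer width at $2n$. The remaining steps, namely checking the padding argument when $n$ is not a power of two and confirming that the $m$ rows decouple into a block-diagonal weight structure so the same construction applies verbatim, are routine once the per-row gadget and its width accounting are in place.
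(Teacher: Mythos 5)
Your proof is correct and follows essentially the same route as the paper's: a balanced binary tournament of pairwise maxima, each realized exactly by two ReLU layers, giving depth $2\log_2 n$ and per-layer width at most $2n$. The only cosmetic differences are that you argue level-by-level with the identity $\max(a,b)=\tfrac12[(a+b)+|a-b|]$ and an explicit split representation $(\mathrm{ReLU}(v),\mathrm{ReLU}(-v))$, whereas the paper proceeds by induction on $n=2^k$ using $\max(x_1,x_2)=\mathrm{ReLU}(x_1-x_2)+\mathrm{ReLU}(x_2)-\mathrm{ReLU}(-x_2)$ to handle the same sign issue.
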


Following the maximization operations, equation (\ref{eq:to_z}) requires summation of a group of features. However, the VF layer uses max instead of sum operators to aggregate features produced by $\Mcal$ and $\Qcal$ operators. Assuming that the $\Mcal$ operator has performed the maximization component of equation (\ref{eq:to_z}) producing max-marginals, Proposition \ref{propos:feature_sum} shows how the $\Qcal$ layer can be used to produce a matrix $\mathbf{W}$ that converts the max-marginals into an intermediate form to be used with the max aggregators. The output of the max aggregators can then be transformed with a linear layer ($\mathbf{Q}$ in Proposition \ref{propos:feature_sum}) to complete the computation of the summation operation required in equation (\ref{eq:to_z}). Hence, equation (\ref{eq:to_z}) can be implemented using the VF layer together with a linear layer that can be absorbed in the $\Mcal$ operator of the following FV layer.

%Proposition \ref{propos:matrix_max} suggests that the max-marginal operation in \eqref{eq:decomposed} can be factorized by a normal neural network with one hidden layer. Thus in our model we can use the $\Mcal$ net in \eqref{eq:MPNN_spec} to finish the max-marginal task. Then we will need summation of max-marginals over nodes (for $b_{c\rightarrow i}(z_c)$)  or factors (for $b_i(x_i)$). We show that operation can also be parameterized using the following proposition.

\begin{proposition}\label{propos:feature_sum}
  For arbitrary non-negative valued feature
  matrix $\mathbf{X} \in \mathbb{R}_{\geqslant 0}^{m\times n}$ with $x_{ij}$ as its
  entry in the $i^{\text{th}}$ row and $j^{\text{th}}$ column, there
  exists a constant tensor $\mathbf{W} \in
  \mathbb{R}^{m \times n \times mn}$ that can be used to transform $\mathbf{X}$ into an intermediate representation $y_{ik} = \sum_{j}x_{ij}w_{ijk}$, such that after maximization operations are done to obtain $\hat y_k = \max_{i}y_{ik}$, we can use another constant matrix $\mathbf{Q}\in \mathbb{R}^{n
    \times mn }$ to obtain 
  \begin{equation}
    [\sum_ix_{ij}]_{j=1}^{n} = \Qb[\hat y_k]_{k=1}^{mn}.
  \end{equation}
\end{proposition}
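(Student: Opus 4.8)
The plan is to exploit the $mn$ intermediate coordinates indexed by $k$ as a collection of ``slots,'' one for each entry of $\mathbf{X}$, so that the max aggregation over the row index recovers every entry of the matrix individually; the column sums can then be assembled by a single linear map $\mathbf{Q}$. The conceptual obstacle is that the only nonlinearity available between the two linear maps is a maximum over the row index $i$, whereas the target quantity is a \emph{sum} over $i$. The observation that unlocks the construction is that for a set of non-negative numbers consisting of a single value together with zeros, the maximum returns exactly that single value; so if I arrange for each slot $k$ to see one entry $x_{i'j'}$ in row $i'$ and zeros in all other rows, the max over rows recovers $x_{i'j'}$ exactly. Here the non-negativity hypothesis $\mathbf{X}\in\mathbb{R}_{\ge 0}^{m\times n}$ is essential.

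Concretely, I would identify $k$ with a pair $(i',j') \in [m]\times[n]$, which is a bijection onto $[mn]$, and define the constant tensor $\mathbf{W}$ by the selector $w_{ij(i',j')} = \mathds{1}[i=i']\,\mathds{1}[j=j']$. Then $y_{i,(i',j')} = \sum_j x_{ij} w_{ij(i',j')} = \mathds{1}[i=i']\,x_{ij'}$, which vanishes for every row $i\neq i'$ and equals $x_{i'j'}\ge 0$ for $i=i'$. Taking the max over $i$ and invoking non-negativity gives $\hat y_{(i',j')} = \max_i y_{i,(i',j')} = x_{i'j'}$, so the aggregated vector $[\hat y_k]_{k=1}^{mn}$ is precisely the vectorized matrix $\mathbf{X}$.

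Finally I would choose the constant matrix $\mathbf{Q}$ to sum the slots sharing a column, i.e. $Q_{j,(i',j')} = \mathds{1}[j=j']$, which yields $(\mathbf{Q}\hat y)_j = \sum_{i',j'} \mathds{1}[j=j']\,x_{i'j'} = \sum_{i'} x_{i'j}$, exactly the desired column sums. The remaining work is bookkeeping: checking that $\mathbf{W}$ and $\mathbf{Q}$ are genuinely independent of $\mathbf{X}$, and that the index identification $k\leftrightarrow(i',j')$ is consistent with the stated tensor shapes $\mathbf{W}\in\mathbb{R}^{m\times n\times mn}$ and $\mathbf{Q}\in\mathbb{R}^{n\times mn}$. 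The point worth emphasizing — rather than any hard computation — is precisely the reliance on non-negativity: it is what lets a maximum stand in for selecting an individual summand, and the scheme would break for general real matrices, since a negative entry $x_{i'j'}$ in its slot would be dominated by the zeros coming from the other rows and the max would return $0$ instead of $x_{i'j'}$.
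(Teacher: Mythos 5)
Your construction is correct and is essentially the paper's own argument: the paper proves this as a corollary of its Lemma on lossless max-aggregation, where the tensor $\mathbf{W}$ plays exactly the role of your indicator/selector tensor, placing each entry $x_{i'j'}$ into its own slot with zeros in the other rows so that (by non-negativity) the max over $i$ recovers the vectorization of $\mathbf{X}$, after which a fixed linear map yields the column sums. Your emphasis on why non-negativity is indispensable matches the paper's reliance on it in that Lemma.
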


%The basic idea of Proposition \eqref{propos:feature_sum} is that if we want to sum over $n$ length-$k$ vectors, we can first map each of the vector to a $nk$-dimensional space using $n$ different $k\times nk$ matrices, then each of the vector may use part of the space to store some the information. After that the coordinate-wise maximization over the $n$ $nk$-dimensional vector will results in a length-$nk$ vector, which is the concatenation of the $n$ vectors. Finally, through a linear transformation of the length-$nk$ vector, we can get a length-$k$ vector, which is exactly the summation of the summation of the $n$ length-$k$ vectors. For our FGNN, with proper edge feature $\tb_{ci}$, the $n$ different $k\times nk$ matrices can be generated by a $\Qcal$ network in \eqref{eq:MPNN_spec}, and then we can use one FGNN layer plus one linear layer to parameterize the summation of max-marginals in \eqref{eq:decomposed}. 

Equation (\ref{eq:to_x}) can be implemented in the same way as equation (\ref{eq:to_z}) by the FV layer. First the max operations are done by the $\Mcal$ units to obtain max-marginals. The max-marginals are then transformed into an intermediate form using the $\Qcal$ units which are further transformed by the max aggregators. An additional linear layer is then sufficient to complete the summation operation required in equation (\ref{eq:to_x}). The final linear layer can be absorbed into the next FGNN layer, or as an additional linear layer in the network in the case of the final Max-Product iteration.

We have demonstrated that we can use $k$ FGNN layers and an additional linear layer to parameterize $k$ iterations of max product belief propagation. Combined with Proposition \ref{propos:decomposition} which shows that we can use one layer of FGNN to generate tensor decomposition, we have the following corollary.
%  \begin{wrapfigure}{r}{0.4\textwidth}
%   \includegraphics[width=0.35\textwidth]{}
%   \caption{Basic idea for parameterize summation over a group of
%      features as maximization over a group of features.} \label{fig:idea_sum_as_max}
%  \end{wrapfigure}
%  After the maximization, in \eqref{eq:decomposed} it is required sum
%  max-marginals over a group of nodes or factors. If we are using
%  summation as aggregation function \eqref{eq:msg}, then that operation
%  can be naively done. For maximization operation, we will use
%  Proposition \ref{propos:feature_sum} to show that the summation can
%  be done by a neural network as \eqref{eq:MPNN_spec} plus one Linear
%  layer. The basic idea is shown in \eqref{fig:idea_sum_as_max}. Assume
%  that we are summing over $m$ vectors with length $n$ ......

%  For this operation, with proper edge feature $t_{ci}$, we can use the neural network $\Qcal$ to generate the a group of weight matrice orgaines as the tensor  
% maximization operation over a row of column can be exactly
% parameterized by a two layer neural work, and Proposition
% \ref{propos:feature_sum} suggests that the summation operation can be
% exactly parameterized by one FGNN layer
% plus one linear linear layer. Using Proposition \ref{propos:decomposition}
% \ref{propos:matrix_max} and \ref{propos:feature_sum}, we immediately
% have the following corollary \textcolor{red}{TODO: rewrite this
%   paragraph to make it more clear}.
\begin{corollary}
  The max-product proposition in \eqref{eq:max_product} can be exactly
  parameterized by the FGNN. 
\end{corollary}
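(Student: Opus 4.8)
The plan is to prove the corollary by composition: assemble the single decomposition layer of Proposition~\ref{propos:decomposition} with $k$ FGNN layers that each simulate one iteration of Max-Product, so that a forward pass through the resulting network reproduces exactly the beliefs $b_i(x_i)$ after $k$ iterations of \eqref{eq:max_product}. The argument is essentially an assembly of the three preceding propositions together with an induction on the iteration count.

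First I would invoke Proposition~\ref{propos:decomposition} to replace the tabular factor log-potentials $\log\phi_c(\xb_c)$ by their rank-1 decomposition $\log\phi_{ic}(x_i,z_c)$ using one FGNN layer, leaving the unary potentials $\theta_i(x_i)$ untouched. This reduces the Max-Product recursion \eqref{eq:max_product} to the equivalent pair \eqref{eq:to_z}--\eqref{eq:to_x}, whose equivalence was already established in the text; from here on only two primitive operations appear, a maximization over one axis of a matrix and a summation over a group of features.

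Next I would realize one iteration as a VF layer followed by an FV layer. For \eqref{eq:to_z} the inner term $\max_{x_{i'}}[\log\phi_{i'c}(x_{i'},z_c)+b_{i'}(x_{i'})]$ is a max-marginal, which by Proposition~\ref{propos:matrix_max} is computed exactly by the $\Mcal$ unit of the VF module acting on the stacked variable and factor features. The remaining sum $\sum_{i'\neq i}$ over the incident variables is precisely the feature summation of Proposition~\ref{propos:feature_sum}: the $\Qcal$ unit produces the intermediate representation $y_{ik}$, the $\Agg$ (max) aggregator yields $\hat y_k$, and the residual linear map $\Qb$ completes the sum. Equation \eqref{eq:to_x} is handled identically by the FV layer, with the extra unary term $\theta_i(x_i)$ added as a constant feature (an affine operation the subsequent linear map absorbs). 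The trailing $\Qb$ at the end of each layer is folded into the $\Mcal$ operator of the next layer's VF module, exactly as claimed in the preceding discussion, so $k$ iterations cost $k$ FGNN layers plus one final linear layer that reads off $b_i(x_i)$.

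The main obstacle I anticipate is bookkeeping rather than a single hard estimate. I must (i) fix a joint feature encoding for each variable and factor node that simultaneously carries the current beliefs $b_i(x_i)$ and the intermediate messages $b_{c\rightarrow i}(z_c)$ across layers, so every layer receives the inputs its $\Mcal$ and $\Qcal$ units expect; (ii) satisfy the non-negativity hypothesis of Proposition~\ref{propos:feature_sum} by an order-preserving shift of the max-marginals before the summation step and undoing it afterward; and (iii) verify that absorbing each layer's trailing linear map into the following layer preserves exactness at the boundaries, namely the first iteration after decomposition and the final read-out. Once these interfaces are pinned down, the corollary follows by induction on $k$.
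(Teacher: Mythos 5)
Your proposal is correct and follows essentially the same route as the paper: Proposition~\ref{propos:decomposition} for the one-layer decomposition, Proposition~\ref{propos:matrix_max} for the max-marginals inside the $\Mcal$ units, Proposition~\ref{propos:feature_sum} for the summation via $\Qcal$, max-aggregation, and a trailing linear map absorbed into the next layer, giving $k$ FGNN layers plus a final linear layer. The three ``obstacles'' you flag are exactly the points the paper handles in its supplementary material --- reserving identity-mapped feature blocks to carry $\log\phi_{ic}$ and $\theta_i$ across layers, and shifting the log-potentials by their minima to meet the non-negativity hypothesis --- so nothing is missing.
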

% Assume that we have a hyper graph $\Hcal = (\Vcal, \Fcal)$, where
% the node set $\Vcal=\{1,2,\ldots,n\}$ and the cluster set $\Fcal\in
% 2^{\Vcal}$.

% \paragraph{Factorized Message Passing Layer}
% Let $h=\{1, 2, \ldots n\}$ be a factor of $n$ nodes, where each $i\in h$
% is associated with a feature vector $\fb_i\in \mathbb{R}^{d_{u}}$, and the factor $h$ is
% also associated with a feature vector $\Fcal_h \in \mathbb{R}^{d_{h}}$, a Factorized Message
% Passing Layer (FMPL) consists a series of operations as follows,
% \begin{subequations}
%   \begin{align}
%     \label{eq:2}
%     \hat \fb_{hi} =& \Wb [\Fcal_h, \fb_i]\\
%     \bar \fb_{hi} =& \Qb_{hi} \hat \fb_{hi} \\
%     \tilde \Fcal_h = & \Agg_{i\in h} \bar \fb_{hi}
%   \end{align}
% \end{subequations}

% Given a series of feature vectors $\Fb = [\fb_1, \fb_2, \ldots, \fb_n]$, let
% $\phi(\xb|\Fb, \Theta)=\langle$ with $\xb = [x_1, x_2, \ldots, \xb_n]$

% \begin{equation}
%   \label{eq:to_z}
%   b_h(z) = \sum_{i\in h}\max_{x_i}\left[\phi_i(x_i, z) + \tilde \theta_i(x_i)\right]
% \end{equation}
% \begin{equation}
%   \label{eq:to_x}
%   b_i(x_i) = \tilde\theta_i(x_i)  + \max_z[b_h(z) + \phi_i(x_i, z) - \max_{x_i}\phi_i(x_i, z)]
% \end{equation}

% \begin{proposition}
%   Given a set of positive feature vectors $[\xb_i]_{i=1}^m$, there exists a
%   group of parameters $[\Wb_i]_{i=1}^n$ and a matrix $\Ub$, s.t.
%   \begin{align}
%     \label{eq:4}
%     \Yb & = \Wb_i\xb_i,\\
%     \zb & = \sum_{i}\xb_i = \Ub[\max_{j}y_{ij}],
%   \end{align}
%   where $y_{ij}$ is the entry of $\Yb$ in the $i^{\text{th}}$ row and $j^{\text{th}}$ column.
% \end{proposition}

\paragraph{Transformation into Graph Neural Network} For graph
structures where there exists a perfect matching in the
Factor-Variable bipartite graph,
%the FGNN can be implemented as a MPNN
% layer by packing the variable and factor features.
i.e. there exists an invertible function $h: \Vcal \mapsto \Ccal$, s.t.
$
  \forall i \in \Vcal,  i \in s(c), c= h(i); ~~ \forall c \in \Ccal,  s(c) \ni i=h^{-1}(c),
$
a FGNN layer can be implemented as a MPNN layer by stacking the variable
feature and factor feature as follows\footnote{More details and
  derivations are provided in the supplementary file.},
% In the FGNN, if the Variable-to-Factor
% module and the Top-Down module have the same structure but with
% different parameter, then for some special graph structure we can
% stack the Bottom-Up module and Top-Down module as one MPNN module.
% One
% typical example is for the point cloud processing. Assume we have a
% point cloud with $n$ points, then for each points we can define one
% factor which include its $k$-nearest neighbors.
% Then we will have $n$ nodes and
% $n$ factors, which have the same feature vector. It is also notable
% that $k$-nearest neighbor is approximately symmetric. That is if node
% $i$ appears in node $j$'s $k$-nearest neighbor, then it is likely that
% $j$ also appears in $i$'s $k$-nearest neighbor. In that case, the
% Top-Down and Bottom-Up module can be approximately stacked as follows
\begin{align}
  \label{eq:stack_mp_nn}
  \left[
  \begin{array}{l}
    \hat \gb_{h(i)} \\
    \hat \fb_i 
  \end{array}
  \right] = \max_{i' \in \Ncal(i)}\left[
  \begin{array}{ll}
    \Qcal(\tb_{h(i')i'}|\Phi_{\text{VF}}) & 0\\
     0 & \Qcal(\tb_{h(i')i'}|\Phi_{\text{FV}})
  \end{array}
  \right]\left[
  \begin{array}{l}
    \Mcal([\gb_{h(i')}, f_{i'} ]|\Theta_{\text{VF}}) \\
    \Mcal([\gb_{h(i')}, f_{i'} ]|\Phi_{\text{FV}})
  \end{array}
  \right],
\end{align}

\begin{wrapfigure}{r}{0.4\textwidth}
  \centering
  \includegraphics[width=0.4\textwidth]{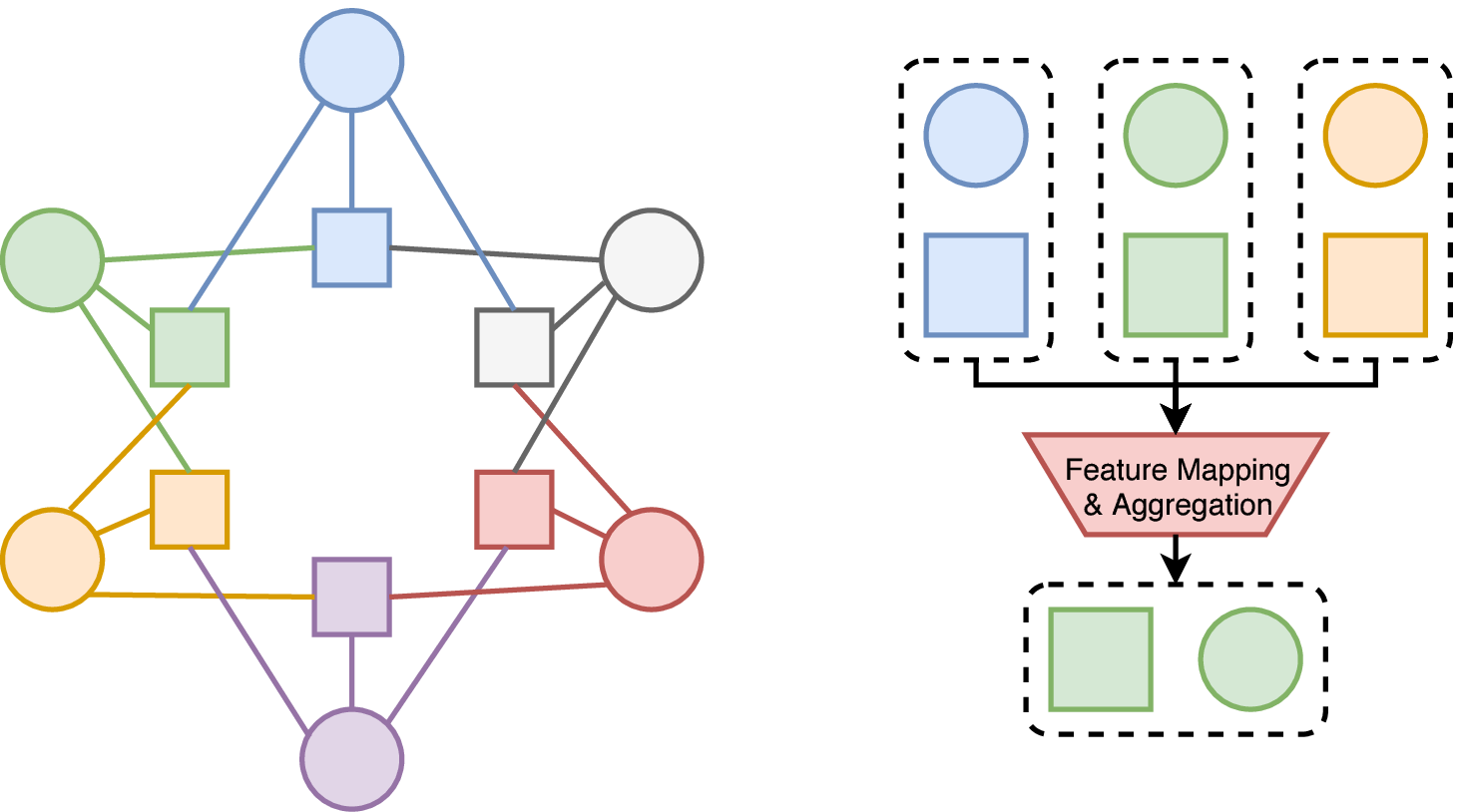}
  \caption{ Example of transformation into a MPNN.}\label{fig:factor_node_shrinking}
  		\vspace*{-15pt}
\end{wrapfigure}
where the neighborhood relation $\Ncal(i)$ is defined as
$
  \Ncal(i) = \{j| j \in s(c), c =h(i) ~\text{or } i \in s(c'), c'=h(j)\}.
$
The transformation may connect unrelated factors and
nodes together (\ie a tuple $(c, i)$ s.t. $i\not\in s(c)$). We can add additional gating functions to remove irrelevant connections, or view the additional connections as providing useful additional approximation capabilities for the network. Various graph structures satisfy the above
constraints in practice. Figure \ref{fig:factor_node_shrinking} shows an example of
the factor graph transformation. Another example is the graph
structure in point cloud segmentation proposed by \citet{dgcnn}, where for each
point, a factor is defined to include the point and its $k$-nearest
neighbor. 

% where $c(i)$ denotes the factor with center $i$. Obviously the stacked
% model \eqref{eq:stack_mp_nn} is a special case of our MPNN model
% \eqref{eq:MPNN_spec}. Thus for the above special structure we can
% directly use the MPNN model \eqref{eq:MPNN_spec} to encode and
% propagate higher order information.

% In the later part, we will show
% that the max product belief propagation can be exactly parameterized
% by our FGNN module. 
% Here $\Mcal(\cdot|\Theta_{\Bcal})$ and $\Mcal(\cdot|\Theta_{\Tcal})$
% are two neural network which encode the node and factor features, and
% $\Qb$ is another neural network which maps the edge feature
% vector $t_{fi}$ to a weight vector. Then new feature will be generate
% by matrix product, and the generated  will be passed to an aggregating
% function. 

\section{Experiment}
\label{sec:experiment}
In this section, we evaluate the models constructed using FGNN for
two types of tasks: MAP inference over higher order PGMs, and point cloud segmentation. % The
% proposed methods are compared with previous graph neural network
% models including, PointNet \citep{qi2017pointnet}, EdgeConv
% \citep{dgcnn} and PointCNN \citep{li2018pointcnn}.
% For MAP inference, we also compare the proposed model with the
% solution from Linear Programming Relaxation
% \citep{SonGloJaa_optbook}. 

\subsection{MAP Inference over PGMs}
\label{sec:synthetic-data}

\paragraph{Data}

We construct three synthetic datasets for this experiment in the following manner.
%We first state the similarities that all three datasets share. 
We start with a chain structure with length 30 where all nodes take binary states. The node potentials are all randomly generated from the uniform distribution over $[0, 1]$. We use pairwise potentials that encourage two adjacent nodes to take state $1$, i.e. the potential function gives high value to configuration $(1,1)$ and low value to all others. % configurations
 In the first dataset, the pairwise potentials are fixed, while in the other two datasets, they are randomly generated. We then add the budget higher order potential \citep{martins2015ad} at every node; these potentials allow at most $k$ of the 8 variables that are within their scope to take the state $1$. For the first two datasets, the value $k$ is set to $5$ and in the third dataset, it is set randomly. Parameters that are not fixed are provided as input factor features; complete description of the datasets is included in the supplementary material.
% As to the edge features induced by the pairwise potential, in order to compare with LP-relaxation which performs poorly when the pairwise potential encourages the two adjacent nodes to take the value 1 at the same time, we intentionally force the pairwise potential to display this property.
%Here we consider 3 different scenario. In the first scenario, the pairwise and higher order potentials are both fixed. In the second scenario, the pairwise potentials are random generated while the higher order potentials are fixed. In the last setting, both pairwise and higher order potentials are random generated.
Once the data item is generated, we use a branch-and-bound solver
\citep{martins2015ad} to find the exact MAP solution, and use it % the MAP solution
as label to train our model. 

We test the ability of the proposed model to find the MAP solutions, and compare the results with other graph neural network based methods including PointNet\citep{qi2017pointnet} and EdgeConv \citep{dgcnn} as well as several specific MAP inference solver including AD3 \citep{martins2015ad} which solves a linear programming relaxation of the problem, and Max-Product Belief Propagation \citep{weiss2001optimality}, implemented by \citep{mooij2010libdai}. Both AD3 and Max-Product are approximate inference algorithms and are run with the correct model for each instance.

\paragraph{Architecture and training details} 
We use a multi-layer
factor graph neural network with architecture
\textsc{FGNN(64) - Res[FC(64) - FGNN(64) - FC(64)] - MLP(128)
  - Res[FC(64) - FGNN(64) - FC(128)] - FC(256) - Res[FC(64) - FGNN(64) -
  FC(256)] - FC(128) - Res[FC(64) - FGNN(64) - FC(128)] - FC(64) -
  Res[FC(64) - FGNN(64) - FC(64)] - FGNN(2)}. 
Here one FGNN($C_{\text{out}}$) is a FGNN layer with $C_{\text{out}}$
as output feature dimension with ReLU \citep{nair2010rectified} as activation. One FC($C_{\text{out}}$) is a fully
connect layer with $C_{\text{out}}$ as output feature dimension and
ReLU as activation.  \text{Res[]} is a neural network with
residual link from its input to output \citep{he2016deep}.

%As a result of the constructions, different datasets have different
%inputs for the FGNN; the inputs for each instance are the parameters
%of the PGM that are not fixed. For all dataset, 
%the randomly generated unary potentials are used as input
%feature for variable node. For pairwise and higher order factors, if the potential is
%fixed then empty feature will be used in the corresponding factor
%node, otherwise the randomly generated potential parameter will be
%used as input feature of the corresponding node. 

% For Dataset1, each variable node comes
% with the randomly generated potential as features. For Dataset2, each
% variable node and pairwise factor node use the randomly generated
% unary and pairwise feature as input. 

% For Dataset1, only the node potentials
% are not fixed, hence each input instance is a factor graph with the
% randomly generated node potential added as the input node feature for
% each variable node. For Dataset2, % randomly generated node potentials
% % are used as variable node features while randomly generated pairwise
% % potential parameters are used as the corresponding pairwise factor
% % node features
% each variables node comes with random generated node potentials as
% feature while each pairwise factor comes with the 
% . Finally, for Dataset3, the variable nodes, the pairwise
% factor nodes and the high order factor nodes all have corresponding
% input features.

The model is implemented using pytorch \citep{paszke2017automatic}
trained with Adam optimizer with initial learning rate
$\text{lr}=3\times 10^{-3}$ and after each epoch, % the learning rate
lr
is decreased by a factor of $0.98$\footnote{Code is at \url{https://github.com/zzhang1987/Factor-Graph-Neural-Network}}. For PointNet and EdgeConv% , we train the % two model
are trained using their recommended hyper parameter (for point cloud segmentation problems). For all the models listed in Table \ref{tab:res_syn}, we train for $50$ epoches after which all models achieve convergence.

\paragraph{Results} We compare the prediction of each method with that
of the exact MAP solution. The percentage agreement (including the
mean and standard deviation) are provided in Table
\ref{tab:res_syn}. Our model achieves far better result on both
Dataset1 and Dataset2 compared with all other methods. The performance
on Dataset3 is also comparable to that of the LP-relaxation. % The
% LP-relaxation are solved by AD3\citep{martins2015ad} using an accelerated sub-gradients
% methods. 

There is no comparison with PointNet\citep{qi2017pointnet} and DGCNN\citep{dgcnn} on Dataset2 and Dataset3 because it is generally difficult for them to handle edge features associated with random pairwise potential and high order potential. Also, due to the convergence problems of Max-Product solver on Dataset3, this experiment was not carried out.
%We also train three versions of our own model with different number of nearest neighbors. The results are reasonable in the sense that more information can be incorporated with the increase of the neighbor number when it is smaller than the order of the largest high order potential. Meanwhile, when the neighbor number is too small for difficult problems, the performance suffers.

Max-Product performs poorly on this problem. So, in this case, even though it is possible for FGNN to emulate the Max-Product algorithm, it is better to learn a different inference algorithm.

%Another thing worth noticing is that PointNet and Max-Product both perform poorly on Dataset1, while PointNet has rather low standard deviation ($0.007$) compared with Max-Product ($0.101$). PointNet converges quickly during training, and the result can be perceived to be random guess on this certain dataset. As to Max-Product, its predictions on different instances differ tremendously, with the accuracy of some instances approaching $1$, and some other $0$. 
%It is understandable because empirically Max-Product needs to be tuned carefully on each instance so as to have a comparably good result.

\begin{table}[htbp]
  \small
  \centering
  \setlength{\tabcolsep}{3pt}

  \begin{tabular}{lccccc}
    \toprule
    \thead{Agreement \\ with MAP}  &\thead{PointNet\citep{qi2017pointnet}} & \thead{DGCNN\citep{dgcnn}} & LP Relaxation  & Max-Product & \thead{Ours}\\
    \midrule
    Dataset1 & 42.6$\pm$ 0.007 & 60.2$\pm$0.017 & 80.7$\pm$0.025 &53.0$\pm$0.101 & \textbf{92.5}$\pm$0.020\\
    Dataset2 & -- & -- & 83.8$\pm$0.024 &54.2$\pm$0.164 & \textbf{89.1}$\pm$0.017\\
    Dataset3 & -- & -- & \textbf{88.2}$\pm$0.011 & -- & 87.7$\pm$ 0.013\\
    \bottomrule
  \end{tabular}
%  \begin{tabular}{lccccccc}
%    \toprule
%    \thead{Agreement \\ with MAP}  &\thead{PointNet\citep{qi2017pointnet}} & \thead{DGCNN\citep{dgcnn}} & LP  & Max-Product &\thead{Ours(4)\footnote{The number in the parentheses is the number of nearest neighbor taken into consideration in the model.}}
%              & \thead{Ours(6)} & \thead{Ours(8)}\\
%    \midrule
%    Dataset1 & 42.6$\pm$ 0.007 & 60.2$\pm$0.017 & 80.7$\pm$0.025 &53.0$\pm$0.101 & 81.7$\pm$0.019& 89.9$\pm$0.021 & \textbf{92.5}$\pm$0.020\\
%    Dataset2 & -- & -- & 83.8$\pm$0.024 &--&  50.7$\pm$ 0.003&88.9$\pm$ 0.018 & \textbf{89.1}$\pm$0.017\\
%    Dataset3 & -- & -- & \textbf{88.2}$\pm$0.011 & -- & --  &--& 87.7$\pm$ 0.013\\
%    \bottomrule
%  \end{tabular}
  \caption{ MAP inference results on synthetic datasets of PGMs}
  \label{tab:res_syn}
  %\vspace{-10pt}
\end{table} 

We have used FGNN factor nodes that depend on 8 neighboring variable nodes to take advantage of known dependencies. We also did a small ablation study on the size of the high order potential functions on Dataset1. The resulting accuracies are 81.7 and 89.9 when 4 and 6 variables are used instead. This shows that knowing the correct size of the potential function can give advantage.

\subsection{Point Cloud Segmentation}
\label{sec:3d-part-segmentation}

\paragraph{Data} We use the Stanford
Large-Scale 3D indoor Spaces Dataset (S3DIS) \citep{armeni20163d} for
semantic segmentation and the ShapeNet part dataset
\citep{yi2016scalable} for part segmentation.

The S3DIS dataset includes 3D scan point clouds for 6 indoor areas
including 272 rooms labeled with 13 semantic categories. We follow
the same setting as \citet{qi2017pointnet} and \citet{dgcnn}, where
each room is split into blocks with size 1 meter $\times$  1 meter,
and each point is associated with a 9D vector (absolute spatial coordinates,
RGB color, and normalized spatial coordinates). For each block, 4096
points are sampled during training phase, and all points are used
in the testing phase. In the experiment we use the same 6-fold cross
validation setting as \citet{qi2017pointnet,dgcnn}.

The ShapeNet part dataset contains 16,811 3D
shapes from 16 categories, annotated with 50 parts. From each shape  
2048 points are sampled and most %label
data items are with less than
six parts. In the part segmentation, we follow the official
train/validation/test split % scheme
provided by \citet{chang2015shapenet}.

\paragraph{Architecture and training details} For both semantic segmentation and part
segmentation, for each point in the point
cloud data, we define a factor which include the point and its
$k$-nearest neighbor and use the feature of that point as factor
feature, and the edge feature is computed as $t_{ci}=\gb_c[0:3] - \fb_i[0:3]$. 
In our network, we set $k=16$. A nice property of such factor construction procedure is that 
it is easy to find a perfect matching in the Factor-Variable bipartite graph, and then
the FGNN can be efficiently implemented as MPNN.

Then a multi-layer FGNN with the architecture \textsc{Input  - 
FGNN(64)  -  FC(128)  -  Res[FC(64) - FGNN(64) - FC(128)]  - 
FC(256) - Res[FC(64) - FGNN(64) - FC(256)]  -  FC(512)  - 
Res[FC(64) - FGNN(64) - FC(512)]  -  FC(512) -  GlobalPooling  - 
FC(512) - Res[FC(64) - FGNN(64) - FC(512)] - FC(256) - DR(0.5) -
FC(256) - DR(0.5) - FC(128) - FC($N$)}, where FGNN, FC and
\textsc{Res} are the same as previous section. DR($p$) is the
dropout layer with dropout rate $0.5$, and the global pooling layer (\textsc{GlobalPooling})
as the same as the global pooling in PointNet, and $N$ is the number
of classes.

Our model % are implemented using pytorch\citep{paszke2017automatic} and
is trained with
the Adam optimizer with initial learning rate $3\times 10^{-3}$, and
after each epoch, the learning rate is decreased by a factor of
$0.98$. For semantic segmentation, we train the model with 100 epoches
and for part segmentation with batch size 8, and we train model for
200 epoches with batch size 8 for part segmentation on a single NVIDIA RTX 2080Ti card. In the
experiment, we strictly follow the same protocol as
\citet{qi2017pointnet,dgcnn} for fair comparison. More details on the
experiments are provided in the supplementary files. 

\paragraph{Results} For both % semantic segmentation and part segmentation
tasks, 
we use Intersection-over-Union (IoU) on points to evaluate the performance 
of different models. We strictly follow the evaluation scheme of \citet{qi2017pointnet,dgcnn}.
The quantitative results of semantic segmentation is shown in Table \ref{tab:res_s3dis} and the quantitative results of part segmentation is shown in Table \ref{table:res_shapenet}. In semantic segmentation our algorithm attains the best performance while on part segmentation, our algorithm attains comparable performance with the other algorithms. These results demonstrate the utility of FGNN on real tasks.

\begin{table}[t]
  \small
  \centering
  \begin{tabular}{lccccccc}
    \toprule
    \thead{}  &\thead{PointNet\\ (baseline)\citep{qi2017pointnet}} & \thead{PointNet\cite{qi2017pointnet}}  & \thead{DGCNN\citep{dgcnn}} &
\thead{Point-\\CNN\citep{li2018pointcnn}}
              & \thead{Ours} \\
    \midrule
    Mean IoU & 20.1  & 47.6 & 56.1 & 57.3 & \textbf{60.0}\\
    Overall Accuracy & 53.2 & 81.1 & 84.1 & 84.5 & \textbf{85.5} \\
    \bottomrule
  \end{tabular}
  \caption{ 3D semantic segmentation results on S3DIS. The PointCNN
    model is retrained strictly following the protocol in
    \citep{qi2017pointnet,dgcnn} for fair
    comparison.}  \label{tab:res_s3dis}
  % \vspace{-10pt}
\end{table} 

\begin{table}[t]
\scriptsize
\centering 
  \setlength{\tabcolsep}{2.5pt}
\begin{tabular}{lcccccccccccccccccc}
\toprule
     \thead{}&  \thead{\scriptsize Mean} & \thead{\scriptsize Areo} & \thead{\scriptsize Bag} &\thead{\scriptsize CAP} & \thead{\scriptsize Car} & \thead{\scriptsize Chair} & \thead{\scriptsize{Ear}\\ \scriptsize{Phone}} & \thead{\scriptsize Guitar} & \thead{\scriptsize Knife} & \thead{\scriptsize Lamp} & \thead{\scriptsize Laptop} & \thead{\scriptsize Motor} & \thead{\scriptsize Mug} & \thead{\scriptsize Pistol} & \thead{\scriptsize Rocket} & \thead{\scriptsize{State}\\ \scriptsize{Board}} & \thead{\scriptsize Table} \\
     \midrule
     PointNet & 83.7 & 83.4& 78.7& 82.5& 74.9& 89.6& 73.0& 91.5& 85.9& 80.8& 95.3&65.2&93.0&81.2&57.9&72.8&80.6\\
     DGCNN & 85.1 & 84.2&83.7&84.4&77.1&\textbf{90.9}&78.5&91.5&87.3&82.9&96.0&67.8&93.3&82.6&59.7&75.5&82.0\\
     PointCNN \footnote{The PointCNN model is trained with the same
  scheme except for data augmentation. During the training of
  PointCNN, coordinate data are augmented by adding a zero-mean Gaussian noise with variance $0.001^2$.} & \textbf{86.1} & 84.1 & \textbf{86.5} & 86.0 & \textbf{80.8}& 90.6 & \textbf{79.7} & \textbf{92.3} & 88.4 & \textbf{85.3} & \textbf{96.1} & \textbf{77.2} & \textbf{95.3} & \textbf{84.2} & \textbf{64.2} & \textbf{80.0} & \textbf{83.0}\\
     Ours & 84.7 & \textbf{84.7} & 84.0 & \textbf{86.1} & 78.2 & 90.8 & 70.4 & 90.8 & \textbf{88.7} & 82.4 & 95.5 & 70.6 & 94.7 & 81.0 & 56.8 & 75.3 & 80.5 \\
     \bottomrule
\end{tabular}
\caption{ Part segmentation results on ShapeNet part
  dataset.}\label{table:res_shapenet}
\vspace{-10pt}
\end{table}

\section{Conclusion}
\label{sec:conclusion}
We extend graph neural networks to factor graph neural networks, enabling the network to capture higher order dependencies among the variables. The factor graph neural networks can represent the execution of the Max-Product Belief Propagation algorithm on probabilistic graphical models, allowing it to do well when Max-Product does well; at the same time, it has the potential to learn better inference algorithms from data when Max-Product fails. Experiments on a synthetic dataset and two real datasets show that the method gives promising performance.

The ability to capture arbitrary dependencies opens up new opportunities for adding structural bias into learning and inference problems. The relationship to graphical model inference through the Max-Product algorithm provides a guide on how knowledge on dependencies can be added into the factor graph neural networks.
{\small
  \bibliographystyle{plainnat}
  \bibliography{reference/main.bib}
}

\newpage

\appendix
\begin{center}
    \Large{\textbf{Factor Graph Neural Net---Supplementary File}}
\end{center}
\section{Proof of propositions}
% with proper edge
% feature $t_{ci}$ and $\Qcal$-network, the information produced by
% $\Mcal$-network can be lossless aggregated.
% For example 
% Lemma \eqref{lemma:lossless_agg} suggests that with proper edge
% feature $t_{ci}$ and $\Qcal$-network in FGNN, a factor is able to
% keep all information sent from nodes, and a node is able to keep all
% information sent from factors. 

First we provide Lemma \ref{lemma:lossless_agg}, which will be used in
the proof of Proposition \ref{propos:decomposition} and \ref{propos:feature_sum}. 
\begin{lemma}\label{lemma:lossless_agg}
Given $n$ non-negative feature vectors $\fb_i=[f_{i0}, f_{i1}, \ldots,
f_{im}]$, where $ i=1,\ldots, n$, there
exists $n$ matrices $\Qb_i$  with shape $nm\times m$ and $n$ vector
$\hat \fb_i = \Qb_i\fb_i^{T}$, s.t.
\begin{align*}
  , \qquad [\fb_1, \fb_2, \ldots, \fb_n] =
  [\max_{i}\hat f_{i0},\max_{i}\hat f_{i1},\ldots, \max_{i}\hat f_{i,mn}].
\end{align*}
\end{lemma}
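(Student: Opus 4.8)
The plan is to construct, for each $i$, a block-placement matrix $\Qb_i$ that embeds $\fb_i$ verbatim into a disjoint block of coordinates of the length-$nm$ output vector, and then to exploit non-negativity so that the coordinatewise maximum recovers the concatenation without interference. Throughout I treat each $\fb_i$ as a length-$m$ vector, so that the concatenation $[\fb_1,\dots,\fb_n]$ has length $nm$, matching the shape $nm\times m$ of the $\Qb_i$.

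Concretely, first I would partition the $nm$ output coordinates into $n$ consecutive blocks of length $m$, writing a coordinate index as $k=(j-1)m+\ell$ with $j\in\{1,\dots,n\}$ and $\ell\in\{1,\dots,m\}$, so that $k$ is the $\ell$-th slot of block $j$. I would then take $\Qb_i$ to be the $nm\times m$ matrix whose rows $(i-1)m+1,\dots,im$ form the $m\times m$ identity and whose remaining rows are zero. With this choice $\hat\fb_i=\Qb_i\fb_i^{T}$ carries a copy of $\fb_i$ in block $i$ and is identically zero in every other block.

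Next I would evaluate the coordinatewise maximum. At output coordinate $k=(j-1)m+\ell$, the only index for which $\hat f_{ik}$ can be nonzero is $i=j$, where $\hat f_{jk}=f_{j\ell}$; for every $i\neq j$ we have $\hat f_{ik}=0$ by construction. Hence $\max_i \hat f_{ik}=\max(f_{j\ell},0)$, and collecting these identities over all $k$ produces the claimed concatenation $[\fb_1,\dots,\fb_n]$.

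The construction is a routine block-diagonal embedding, and the single step that truly requires the hypothesis — and the main thing to verify carefully — is that the zeros introduced outside each block cannot overwrite a genuine feature value under the max aggregation. Non-negativity, i.e. $\fb_i\in\mathbb{R}_{\geqslant 0}^{m}$, closes this gap immediately, since $f_{j\ell}\geqslant 0$ gives $\max(f_{j\ell},0)=f_{j\ell}$, with even the boundary case $f_{j\ell}=0$ handled. Were negative entries allowed, a zero from another block could dominate a genuine negative value, so one would first have to shift the features into the non-negative range before applying the identical argument; this is precisely why the lemma is stated over $\mathbb{R}_{\geqslant 0}$.
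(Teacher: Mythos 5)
Your construction is exactly the one in the paper: the paper's proof defines $\Qb_i$ as the block matrix with the $m\times m$ identity in the $i$-th block and zero matrices elsewhere, so that $\hat\fb_i$ is a copy of $\fb_i$ padded with zeros, and then invokes non-negativity so the coordinatewise maximum recovers the concatenation. Your argument is correct and adds a careful justification of the one step the paper leaves implicit (that the padding zeros cannot overwrite genuine entries under the max), so nothing further is needed.
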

\begin{proof}
  Let
  \begin{align}
    \label{eq:6}
    \Qb_i = \left[\underbrace{\mathbf{0}^{m\times m}, \ldots, 
    \mathbf{0}^{m\times m}}_{i-1\text{ matrices}}, \mathbf{I},
    \underbrace{\mathbf{0}^{m\times m},\ldots,
    \mathbf{0}^{m\times m}}_{n - i  \text{ matrices}}\right]^{\top},
  \end{align}
  then we have that
  \begin{align*}
    \hat \fb_i = \Qb_i\fb_i^T = \left[\underbrace{0, \ldots, 0}_{(i-1)m
    \text{ zeros}},f_{i0}, f_{i1}, \ldots, f_{im}, \underbrace{0,
    \ldots, 0}_{(n - i)m \text{ zeros}}  \right]^{\top}.
  \end{align*}
By the fact that all feature vectors are non-negative, obviously we
have that
$[\fb_1, \fb_2, \ldots, \fb_n] =
  [\max_{i}\hat f_{i0},\max_{i}\hat f_{i1},\ldots, \max_{i}\hat f_{i,mn}]$.  
\end{proof}
Lemma \eqref{lemma:lossless_agg} suggests that for a group of feature
vectors, we can use the $\Qcal$ operator to produce several $\Qb$
matrices to map different vector to
different sub-spaces of a high-dimensional spaces, and then our
maximization aggregation can sufficiently gather information from
the feature groups. 

\setcounter{theorem}{1}

%The potential functions for the factor graphs are often not known in practice, but may be learned from the input description. In such cases, if each factor potential can be represented by the maximum of a small number of rank-1 tensors, the resulting FGNN required to represent the Max Product inference algorithm will be small. In the next proposition, we assume that the factor log-potentials $\log \phi_c(\xb_c)$ (which may grow exponentially with the number of variables in the factor $c$) are directly used as input features and show that it can be transformed into a (possibly exponentially sized) set of rank-1 tensors by a one-layer FGNN as the initial step of the Max Product inference.

\begin{proposition} 
  A factor graph $\Gcal = (\Vcal,\Ccal,\Ecal)$ with variable log potentials $\theta_i(x_i)$ and 
  factor log potentials $\log\phi_c(\xb_c)$ can be converted into a factor graph $\Gcal'$ with the same variable potentials and the corresponding decomposed factor log-potentials $\log \phi_{ic}(x_{i}, z_c)$ using a one-layer FGNN. 
\end{proposition}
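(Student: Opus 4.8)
The plan is to exhibit explicit choices of the $\Mcal$ and $\Qcal$ maps inside a single Variable-to-Factor module so that the updated factor feature $\tilde\gb_c$ stores exactly the decomposed log-potentials $\log\phi_{ic}(x_i,z_c)$ guaranteed to exist by the decomposition lemma of \citet{kohli2010energy}. The starting point is that the factor feature $\gb_c$ encodes the tabular potential $\phi_c(\xb_c)$, and that this lemma provides a fixed (if instance-dependent in size) map sending this table to the collection $\{\log\phi_{ic}(x_i,z_c)\}_{i\in s(c)}$, with $z_c$ ranging over a finite index set. Because $\Mcal$ is a neural network acting on $[\gb_c, f_i]$ and can represent arbitrary continuous maps, I would let $\Mcal$ compute, from $\gb_c$ together with the identity of the incident variable $i$ (available through $f_i$, or equivalently through the edge feature $t_{ci}$ feeding $\Qcal$), the single block $[\log\phi_{ic}(x_i,z_c)]_{x_i,z_c}$ belonging to variable $i$. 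A global additive shift makes every entry non-negative, which a final readout can undo; this is harmless since log-potentials are defined only up to an additive constant.

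The remaining issue is that the module aggregates the per-variable contributions with a $\max$ rather than by concatenation, whereas the target $\tilde\gb_c$ must retain the blocks of all variables $i\in s(c)$ simultaneously. This is exactly the situation addressed by Lemma~\ref{lemma:lossless_agg}: I would choose $\Qcal(t_{ci})$ to be the routing matrix $\Qb_i$ of that lemma, which embeds variable $i$'s block into its own coordinate subspace and zeros out all others. Since all blocks are non-negative after the shift, taking the coordinatewise $\max$ over $i:(c,i)\in\Ecal$ recovers the concatenation $[\,\log\phi_{1c},\ldots\,]$ losslessly, so $\tilde\gb_c$ holds the complete decomposed factor potential. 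The variable potentials are left untouched by letting the corresponding variable features pass through unchanged, so $\Gcal'$ has the same $\theta_i$ as $\Gcal$.

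I expect the main obstacle to be precisely the mismatch between the max-aggregation semantics of the FGNN layer and the genuinely multivariate, concatenated output required by the decomposition: a naive use of $\max$ would collapse the distinct per-variable blocks into one. The role of the non-negativity assumption together with the subspace-routing construction of Lemma~\ref{lemma:lossless_agg} is to convert $\max$ into lossless concatenation, and getting the shift-and-readout bookkeeping consistent with that lemma is the one place where care is needed. A secondary point worth checking is uniformity: since one shared $\Mcal$ must produce the correct block for every factor, I would emphasize that the construction of \citet{kohli2010energy} is a single fixed function of the tabular potential, so the same $\Mcal$ serves all $c\in\Ccal$ once the index set of $z_c$ is taken large enough to cover the worst-case scope.
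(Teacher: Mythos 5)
Your proposal follows the paper's proof in all essentials: the same decomposition of the tabular potential into univariate pieces indexed by an auxiliary variable $z_c$, the same non-negativity shift, and the same use of Lemma~\ref{lemma:lossless_agg} to route each variable's block into its own coordinate subspace so that max-aggregation becomes lossless concatenation. The one point to tighten is your appeal to $\Mcal$ ``representing arbitrary continuous maps,'' which only yields approximation while the proposition asserts exact parameterization; the paper sidesteps this by observing that the map from the table $\gb_c$ to each block $[\log\phi_{ic}(x_i,z_c)]$ is linear (every entry is a scaled entry of $\gb_c$ or a constant), so it is realized exactly by constant matrices emitted by a linear $\Qcal$-network driven by one-hot edge features, with $\Mcal$ simply passing $\gb_c$ through.
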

\begin{proof}
Without loss of generality, we assume that $\log\phi_c(\xb_c)\geqslant 1$. Then let
\begin{align}\label{eq:decomposition_to_node}
    \theta_{ic}(x_i, z_c) = \left\{ \begin{array}{ll}
    \frac{1}{|s(c)|}\log\phi_c(\xb_c^{z_c}),     & \text{if } \hat x_i = x_i^{z_c}, \\
    -c_{x_i, z_c},     &  \text{otherwise,}
    \end{array}
    \right.
\end{align}
where $c_{x_i, z_c}$ can be arbitrary real number which is larger than
$\max_{\xb_c}\theta_c(\xb_c)$. 
Obviously we will have
\begin{align}
    \log\phi_c(\xb_c) = \max_{z_c}\sum_{i\in s(c)}\theta_{ic}(x_i, z_c)
\end{align}
Assume that we have a factor $c={1,2,\ldots n}$, and each nodes can take $k$ states. Then $\xb_c$ can be sorted as 
\begin{align*} 
[&\xb_c^0=[x_1=0,x_2=0, \ldots, x_n=0],\\
 &\xb_c^1=[x_1=1,x_2=0, \ldots, x_n=0],\\
 &\ldots, \\
 &\xb_c^{k^n-1}=[x_1=k,x_2=k, \ldots, x_n=k]
],
\end{align*}
and the higher order potential can be organized as vector $\gb_c =
[\log \phi_c(\xb_c^0), \log \phi_c(\xb_c^1), \ldots, \log
\phi_c(\xb_c^{k^n - 1})]$. Then for each $i$ the item
$\theta_{ic}(x_i, z_c)$ in \eqref{eq:decomposition_to_node} have
$k^{n+1}$ entries, and each entry is either a scaled entry of the
vector $\gb_c$ or arbitrary negative number less than
$\max_{\xb_c}\theta_c(\xb_c)$.

Thus if we organize $\theta_{ic}(x_i, z_c)$ as a length-$k^{n+1}$
vector $\fb_{ic}$, then we define a $k^{n+1}\times k^n$ matrix $\Qb_{ci}$, where
if and only if the \zzth{l} entry of $\fb_{ic}$ is set to the \zzth{m} entry of
$\gb_c$ multiplied by $1/|s(c)|$, the entry of $\Qb_{ci}$ in \zzth{l} row,
\zzth{m} column will be set to $1/|s(c)|$; all the other entries of
$\Qb_{ci}$ is set to some negative number smaller than
$-\max_{\xb_c}\theta_c(\xb_c)$. Due to the assumption that
$\log\phi_c(\xb_c)\geqslant 1$, the matrix multiplication $\Qb_{ci} \gb_c$
must produce a legal $\theta_{ic}(x_i, z_c)$. 

If we directly define a $\Qcal$-network which produces the above matrices
$\Qb_{ci}$, then in the aggregating part of our network there might be
information loss. However, by Lemma \ref{lemma:lossless_agg} there
must exists a group of $\tilde \Qb_{ci}$ such that the maximization
aggregation over features $\tilde\Qb_{ci}\Qb_{ci}\gb_c$ will produce exactly
a vector representation of $\theta_{ic}(x_i, z_c), i\in s(c)$. Thus if
every $t_{ci}$ is a different one-hot vector, we can easily using one
single linear layer $\Qcal$-network to produce all
$\tilde\Qb_{ci}\Qb_{ci}$, and with a $\Mcal$-network which always
output factor feature, we are able to output a vector representation
of $\theta_{ic}(x_i, z_c), i\in s(c)$ at each factor node $c$. 
% Thus $\theta_{ic}(x_i,
% z_c)$ can be generated by multiply the vector $\gb_c$ with some
% $k^{n+1}\times k^n$ matrix $Q_i$. As a result, we can use $n$
% different length-$n$ one hot vector as edge feature $t_{ci}$ for each
% $i$. Then using a $n\times k^{2n+1}$ linear layer as network $\Qcal$
% in \eqref{eq:MPNN_spec} to generate all matrices $Q_i$, and for the
% $\Mcal$ net, we can let them always output $\log\phi_c(\xb_c)$. Then
% by multiplying the result of $\Qcal$ net and $\Mcal$ net, we are able
% to get all $\phi_i(x_i, z_c)$. Later all $\phi_i(x_i, z_c)$ will be
% passed to a aggregation function. By using Lemma
% \ref{lemma:lossless_agg}, we find a set of $\tilde Q_i$ with shape , s.t. 
% which may results in loss of information. However, we can use the technique in the proof of Proposition  \ref{propos:feature_sum} to get a redundant representation of $\phi_i(x_i, z_c)$ with a larger $Q_i$, and then all the information can be kept and we are able to output all $\phi_i(x_i, z_c)$ after the aggregation function.
\end{proof}

Given the log potentials represented as a set of rank-1 tensors at each factor node, we need to show that each iteration of the Max Product message passing update can be represented by a Variable-to-Factor layer followed by a Factor-to-Variable layer (forming a FGNN layer). We reproduce the update equations here. 
\begin{subequations}
  \begin{align}\label{eq:bp_supp}
%   b_h(z) = & \sum_{i\in h}\max_{x_i}\left[\phi_i(x_i, z) +
%              b_i(x_i)\right], \\
%   \label{eq:to_x}
%   b_i(x_i) = & \theta_i(x_i)  + \sum_{h: i\in h}\max_z[b_h(z) +
%               \phi_i(x_i, z) - \max_{x_i}[\phi_i(x_i, z) + ]].\\
    b_{c\rightarrow i}(z_c) = & \sum_{i'\in s(c),i'\neq i}\max_{x_i'}\left[\log\phi_{i'c}(x_{i'}, z_c) + b_{i'}(x_{i'})\right], \\
  b_i(x_i) = & \theta_i(x_i)  + \sum_{c: i\in s(c)}\max_z[\log\phi_{ic}(x_i, z_c)+ b_{c\rightarrow i}(z_c)].
\end{align}
\end{subequations}
In the max-product updating procedure, we should keep all decomposed
$\log\phi_{i'c}(x_{i'}, z_c)$ and all unary potential
$\theta_i(x_i)$. That requires the FGNN to have the ability to
fitting identity mapping. Obviously, let the $\Qcal$ net always output
identity matrix, let the $\Mcal([\gb_c, f_i]|\Theta_{\text{VF}})$ always output
$\gb_c$, and let $\Mcal([\gb_c, f_i]|\Theta_{\text{FV}})$ always output
$f_i$, then the FGNN will be an identity mapping. As $\Qcal$ always
output a matrix and $\Mcal$ output a vector, we can use part of their
blocks as identify mapping to keep $\log\phi_{i'c}(x_{i'}, z_c)$ and
$\theta_i(x_i)$. The other blocks are used to updating
$b_{c\rightarrow i}(z_c)$ and $b_i(x_i)$. 

% In the belief propagation \eqref{eq:bp_supp}, the summation
% $\log\phi_{i'c}(x_{i'}, z_c) + b_{i'}(x_{i'})$ and $\log\phi_{ic}(x_i,
% z_c)+ b_{c\rightarrow i}(z_c)$ can be easily parameterize by on layer
% of FGNN. In our Proposition 1, all $\phi_{ci}(x_i, z_c)$ are stored
% in factor node, 
% and for each $i$ we can use a matrix $\Qb_{ci}$
% produced by $\Qcal$ to select the correct part of feature, and then
% add them to $b_i(x_i)$ or $b_{ci}(z_c)$.
% The tricky part is the
% max-marginal part and aggregation part. 

First we show that $\Mcal$ operators in the Variable-to-Factor layer can be used to construct the computational graph for the max-marginal operations.
\begin{proposition}
  For arbitrary real valued feature
  matrix $\mathbf{X} \in \mathbb{R}^{m\times n}$ with $x_{ij}$ as its
  entry in the $i^{\text{th}}$ row and $j^{\text{th}}$ column, the feature mapping operation
  $
  \hat \xb = [\max_{j}x_{ij}]_{i=1}^m
  $
  can be exactly parameterized with a 2$\log_2 n$-layer neural network with Relu as activation function and at most $2n$ hidden units. 
\end{proposition}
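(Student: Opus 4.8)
The plan is to reduce the row-wise maximum to a balanced binary tree of pairwise maxima and to exhibit an exact ReLU gadget for a single pairwise maximum. First I would record the elementary identity
\[
\max(a,b) = \frac{1}{2}\bigl[(a+b) + |a-b|\bigr] = \frac{1}{2}\bigl[\mathrm{ReLU}(a+b) - \mathrm{ReLU}(-a-b) + \mathrm{ReLU}(a-b) + \mathrm{ReLU}(b-a)\bigr],
\]
which expresses $\max(a,b)$, for \emph{arbitrary} real $a,b$, as a fixed linear combination of four ReLU units, each applied to an affine function of the two inputs. The point of this symmetric form is that it is self-contained: it never needs to carry a bare linear copy of $a$ or $b$ past the nonlinearity, so the gadgets compose cleanly when chained.

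Next I would organize the computation of $\max_j x_{ij}$ over the $n$ entries of row $i$ as a balanced binary tree of depth $\lceil \log_2 n\rceil$, padding $n$ up to a power of two by appending copies of an existing entry (which leaves the maximum unchanged). At each of the $\log_2 n$ tree levels the number of surviving values is halved by applying the pairwise-max gadget to disjoint pairs. Each level is realized by one affine-plus-ReLU layer that produces the four ReLU features for every pair, followed by one linear layer that forms the pairwise maxima and feeds them into the next level; this pair of layers per level is exactly what produces the factor of two in the $2\log_2 n$ depth. At the first and widest level there are $n/2$ pairs and hence $4\cdot(n/2)=2n$ ReLU units, and every subsequent level is strictly narrower, so the width is at most $2n$ hidden units per layer as claimed.

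Finally I would observe that the $m$ rows are mutually independent, so the full map $\hat{\mathbf{x}} = [\max_j x_{ij}]_{i=1}^m$ is obtained by running $m$ copies of this subnetwork in parallel through a block-structured weight matrix; the depth stays $2\log_2 n$ and the per-row width bound of $2n$ is preserved. The only genuinely delicate points are (i) checking the gadget is exact for negative arguments, which the $|a-b|$ form handles automatically, and (ii) the bookkeeping across levels confirming that the maximum width is attained at the first level. The non-power-of-two padding is a routine technicality, so the main conceptual step is simply recognizing that the maximum is a shallow, logarithmic-depth ReLU circuit via the tree reduction.
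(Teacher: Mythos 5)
Your proof is correct and follows essentially the same strategy as the paper's: a balanced binary tree of pairwise maxima, two ReLU layers per level, giving depth $2\log_2 n$ and width at most $2n$. The only cosmetic difference is the pairwise-max gadget --- you use the symmetric four-unit form $\max(a,b)=\tfrac12[\mathrm{ReLU}(a+b)-\mathrm{ReLU}(-a-b)+\mathrm{ReLU}(a-b)+\mathrm{ReLU}(b-a)]$, while the paper uses the three-unit form $\mathrm{ReLU}(a-b)+\mathrm{ReLU}(b)-\mathrm{ReLU}(-b)$ and phrases the tree as an induction on $n=2^k$; both are exact for arbitrary real inputs and both respect the stated width bound.
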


\begin{proof}
% Let the matrix $\Wb$ be 
% \[
% \begin{matrix}
% \Wb = 
% \end{matrix}
% \]
Without loss of generality we assume that $m=1$, and then we use $x_i$ to denote $x_{1i}$.
When $n=2$, it is obvious that 
\[
\max(x_1,x_2) = \textbf{Relu}(x_1-x_2) + x_2 = \textbf{Relu}(x_1-x_2) + \textbf{Relu}(x_2) - \textbf{Relu}(-x_2)
\]
and the maximization can be parameterized by a two layer neural network with 3 hidden units, which satisfied the proposition. 

Assume that when $n=2^k$, the proposition is satisfied. Then for $n=2^{k+1}$, we can find $\max(x_1, \ldots, x_{2^k})$ and $\max(x_{2^k+1}, \ldots, x_{2^{k+1}})$ using two network with $2k$ layers and at most $2^{k+1}$ hidden units. Stacking the two neural network together would results in a network with $2k$ layers and at most $2^{k+2}$. Then we can add another 2 layer network with 3 hidden units to find $\max(\max(x_1, \ldots, x_{2^k}), \max(x_{2^k+1}, \ldots, x_{2^{k+1}}))$. Thus by mathematical induction the proposition is proved.
\end{proof}

The update equations contain summations of columns of a matrix after
the max-marginal operations. However, the VF and FV layers use max
operators to aggregate features produced by $\Mcal$ and $\Qcal$
operator. Assume that the $\Mcal$ operator has produced the
max-marginals, then we  use the $\Qcal$ to produce several weight
matrix. The max-marginals are multiplied by the weight matrices to
produce new feature vectors, and the maximization aggregating function
are used to aggregating information from the new feature vectors. We
use the following propagation to show that the summations of
max-marginals can be implemented by one MPNN layer plus one
linear layer. Thus we can use the VF layer plus a linear layer to produce 
$b_{c\rightarrow i}(z_c)$ and use the FV layer plus another linear layer 
to produce $b_i(x_i)$. 
% Then one linear is followed to produce
% $b_{c\rightarrow i}(z_c)$ and $b_i(x_i)$.
% We use the following proposition to show that the summations of
% columns can be implemented by the $\Qcal$ operator and the  from the VF layer followed by a linear operator that can be implemented in the FV layer. Another set of max-marginal operations followed by summation operations then need to be done by the FV layer to complete the Max Product iteration. 
Hence to do $k$ iterations of Max Product, we need $k$ FGNN layers followed by a linear layer. 
\begin{proposition}
  For an arbitrary non-negative valued feature
  matrix $\mathbf{X} \in \mathbb{R}_{\geqslant 0}^{m\times n}$ with $x_{ij}$ as its
  entry in the $i^{\text{th}}$ row and $j^{\text{th}}$ column, there
  exists a constant tensor $\mathbf{W} \in
  \mathbb{R}^{m \times n \times mn}$ that can be used to transform $\mathbf{X}$ into an intermediate representation $y_{ik} = \sum_{ij}x_{ij}w_{ijk}$, such that after maximization operations are done to obtain $\hat y_k = \max_{i}y_{ik}$, we can use another constant matrix $\mathbf{Q}\in \mathbb{R}^{n
    \times mn }$ to obtain 
  \begin{equation}
    [\sum_ix_{ij}]_{j=1}^{n} = \Qb[\hat y_k]_{k=1}^{mn}.
  \end{equation}
\end{proposition}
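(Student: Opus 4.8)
The plan is to exploit the non-negativity hypothesis together with Lemma~\ref{lemma:lossless_agg} to convert the desired column sum (a summation over the row index $i$) into a maximization over $i$ without losing any information. The obstacle is structural: the aggregation step only provides $\hat y_k = \max_i y_{ik}$, yet the target $\sum_i x_{ij}$ is a genuine sum over $i$, and in general a maximum discards all but one summand. The resolution is to first spread the $m$ rows into $m$ disjoint coordinate blocks of the $mn$-dimensional intermediate space, so that the componentwise maximum simply re-assembles the entire matrix rather than collapsing it; the actual summation is then deferred to the final linear map $\mathbf{Q}$.

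Concretely, I would index the $mn$ output coordinates as pairs $(i,j)$ with $i \in \{1,\dots,m\}$, $j\in\{1,\dots,n\}$, writing $k = (i-1)n + j$, and define the tensor $\mathbf{W}$ by $w_{ijk} = 1$ when $k = (i-1)n+j$ and $w_{ijk}=0$ otherwise. With $y_{ik} = \sum_j x_{ij} w_{ijk}$ (the reading that retains the index $i$ so it can later be maximized over), this forces $y_{ik} = x_{ij}$ whenever $k$ lies in block $i$, and $y_{ik}=0$ whenever $k$ lies in a block $i'\neq i$. Since every $x_{ij}\geqslant 0$, the aggregation gives $\hat y_k = \max_i y_{ik} = x_{i_0 j_0}$ for $k=(i_0-1)n+j_0$; that is, $\hat y$ is exactly the flattened matrix $\mathbf{X}$ with no information lost, which is precisely the lossless-aggregation phenomenon of Lemma~\ref{lemma:lossless_agg}.

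Finally, I would read off $\mathbf{Q}$ directly from this layout: the $j$-th column sum is $\sum_i x_{ij} = \sum_i \hat y_{(i-1)n+j}$, so setting $Q_{jk}=1$ when $k=(i-1)n+j$ for some $i$ and $Q_{jk}=0$ otherwise yields $(\mathbf{Q}\hat y)_j = \sum_i x_{ij}$, as required. Both $\mathbf{W}$ and $\mathbf{Q}$ are constant, independent of $\mathbf{X}$, matching the statement. The one place where care is genuinely needed is the appeal to non-negativity: it is what guarantees $\max_i$ selects the unique nonzero block entry rather than some spurious value, and it is the single hypothesis that makes the max-to-sum reduction valid, so this is the step I expect to be the crux of the argument.
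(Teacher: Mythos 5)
Your construction is correct and is essentially the paper's own argument made explicit: the paper simply invokes Lemma~\ref{lemma:lossless_agg}, whose block matrices $\Qb_i$ place each row into a disjoint coordinate block of $\mathbb{R}^{mn}$ so that max-aggregation losslessly reassembles $\mathbf{X}$, which is precisely your choice of $\mathbf{W}$, followed by the same summing matrix $\mathbf{Q}$. You also correctly identify non-negativity as the hypothesis that makes the max recover the block entries, which is exactly the role it plays in the lemma.
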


\begin{proof}
The proposition is a simple corollary of Lemma
\ref{lemma:lossless_agg}. The tensor $\Wb$ serves as the same role as
the matrices $\Qb_i$ in  Lemma
\ref{lemma:lossless_agg}, which can convert the feature matrix $\Xb$
as a vector, then a simple linear operator can be used to produce the
sum of rows of $\Xb$, which completes the proof.
\end{proof}
% Let 
% \[
% w_{ijk} = \left\{ \begin{array}{cc}
%     1 & (i-1)n < k < in, j = k ~\text{mod}~ n,  \\
%     0 & \text{otherwise}
% \end{array}
% \right.
% \]
% Obviously
% \[
% \yb_i = [\underbrace{0, \ldots, 0}_{(i-1)n \text{ zeros}}, x_{i1},
% x_{i2}, \ldots, x_{in}, \underbrace{0, \ldots, 0}_{(m - i)n \text{ zeros}}]
% \]
% and 
% \[
% \hat\yb = [x_{11}, x_{12}, \ldots, x_{mn}].
% \]
% As a result, naively there must be a matrix $\mathbf{Q}\in \mathbb{R}^{n
%     \times mn }$ s.t. $[\sum_ix_{ij}]_{j=1}^{n} = \Qb[\hat y_k]_{k=1}^{mn}.$
%\end{proof}

In Lemma \ref{lemma:lossless_agg} and Proposition
\ref{propos:feature_sum}, only non-negative features are considered, while
in log-potentials, there can be negative entries. However, for the MAP
inference problem in \eqref{eq:map_def}, the transformation as follows
would make the log-potentials non-negative without changing the final
MAP assignment,
\begin{align}
  \tilde \theta_i(x_i) = \theta_i(x_i) - \min_{x_i}\theta_i(x_i),
  \qquad \tilde \theta_c(\xb_c) = \theta_c(\xb_c) - \min_{\xb_c}\theta_c(\xb_c). 
\end{align}
As a result, for arbitary PGM we can first apply the above transformation to make the log-potentials non-negative, and then our FGNN can exactly do Max-Product Belief Propagation on the transformed non-negative log-potentials.

\paragraph{Transformation to Graph Neural Network} The factor graph is a bipartite
graph $\Gcal=(\Vcal, \Ccal, \Ecal)$ among factors and nodes. If there
is a perfect matching between factors and nodes, we can use the
perfect matching between factors and nodes to transform the FGNN into a MPNN. Assume that we have a
parameterized FGNN as follows
\begin{equation}
  \begin{split}
        \tilde \gb_c & = \Agg\limits_{i:(c,i) \in \Ecal}
      \Qcal(\tb_{ci}|\Phi_{\text{VF}})\Mcal([\gb_c, f_i ]|\Theta_{\text{VF}})\\
      \tilde f_i&  = \Agg\limits_{c:(c,i) \in \Ecal}\Qcal(\tb_{ci}|\Phi_{\text{FV}})\Mcal([\ \gb_c,
      f_i]|\Theta_{\text{FV}})\label{eq:FGNN_supp}
  \end{split}
\end{equation}
When the perfect matching
exists, there must exist an invertible function $h$ which maps a
node $i\in \Vcal$ to a factor $c\in \Ccal$. Then for each $i \in
\Vcal$, we can pack the feature $[\fb_i, \gb_c], c=h(i)$ together to get a
super-node $[i, c]$.

Then we construct the edges between the super nodes. In
the FGNN \eqref{eq:FGNN_supp}, a node $i$ will exchange information
with all $c$ such that $i\in s(c)$. Thus the super-node $[i, c]$ has
to communicate with  super nodes $[i',c']$ such that $i\in
s(c')=s(h(i'))$. On the other hand, the factor $c$ will communicate with all
$i'$ such that $i'\in s(c)$, and thus the super-node $[i, c]$ has to
communicate with super nodes $[i', c']$ such that $i'\in s(c)$. Upon
these constraints, the neighbors of a super-node $[i, c], c=h(i)$ is
defined as
\begin{align}
  \label{eq:8}
  \Ncal([i, c]) = \{(i', c')| c'=h(i') \wedge (i' \in s(c) \lor i \in s(c')) \}.
\end{align}
As $h$ is a one-to-one matching function, the super node $[i, c], c=h(i)$ can
be uniquely determined by $i$, thus we can use $\Ncal(i) = \{j| j \in s(c)=
h(i) \lor  i \in s(c')=s(h(j))\}$ to represent
$\Ncal([i, c=h(i)])$.

The edge list  $\Ncal(i)$  may create link between unrelated node and
factor (\ie the node and factor do not have intersection). Thus for
each $c$ and $i$ we can
create a tag $\hat t_{ci}$ which equals 1 if $i\in s(c)$ and 0
otherwise. Without loss of generality assume $\Mcal$ and $\Qcal$ net
produces positive features,  with the tag $\hat t_{ci}$ we are able to define an
equivalent MPNN of \eqref{eq:FGNN_supp} as follows:
\begin{align*}
  \left[
  \begin{array}{l}
    \hat \gb_{h(i)} \\
    \hat \fb_i 
  \end{array}
  \right] = \max_{i' \in \Ncal(i)}\left[
  \begin{array}{ll}
    \hat t_{h(i')i'}\Qcal(\tb_{h(i')i'}|\Phi_{\text{VF}}) & 0\\
     0 & \hat t_{h(i')i'}\Qcal(\tb_{h(i')i'}|\Phi_{\text{FV}})
  \end{array}
  \right]\left[
  \begin{array}{l}
    \Mcal([\gb_{h(i')}, f_{i'} ]|\Theta_{\text{VF}}) \\
    \Mcal([\gb_{h(i')}, f_{i'} ]|\Phi_{\text{FV}})
  \end{array}
  \right].
\end{align*}
Furthermore, we can put the tag $\hat t_{ci}$ to the edge feature
$\tb_{ci}$ and let the neural network learn to reject unrelated
cluster and node, and thus the above MPNN becomes
\eqref{eq:stack_mp_nn}.
 
\section{Additional Information on MAP Inference over PGM}

We construct three datasets. All variables are binary. The instances start with a chain structure with unary potential on every node and pairwise potentials between consecutive nodes. A higher order potential is then added to every node. % The chains are of length 30.

The node potentials are all randomly generated from the uniform distribution over $[0, 1]$. We use pairwise potentials that encourage two adjacent nodes to take state $1$, i.e. the potential function give high value to configuration $(1,1)$ and low value to all other configurations. The detailed setting for pairwise potential can be found in Table \ref{tab:pws} and Table \ref{tab:pws_ran}. For example, in Dataset1, the potential value for $x_1$ to take the state 0 and $x_2$ to take the state 1 is 0.2; in Dataset2 and Dataset3, the potential value for $x_1$ and $x_2$ to take the state 1 at the same time is sampled from a uniform distribution over [0, 2].
%In the first dataset, the pairwise potential between each pair of adjacent nodes is fixed to [[0, 0.1], [0.2, 1]], which means the potential for the first and the second node to take state 0 at the same time is 0, 

\begin{minipage}[t]{1.0\linewidth}
  \begin{minipage}{0.45\textwidth}
      \small
      \centering
      \setlength{\tabcolsep}{3pt}
      \begin{tabular}{l|cc}
        \toprule
        \thead{pairwise potential}  &\thead{$x_2=0$} & \thead{$x_2=1$} \\
        \midrule
        \thead{$x_1=0$} & 0 & 0.1  \\
        \thead{$x_1=1$} & 0.2 & 1\\
        \bottomrule
      \end{tabular}
      \captionof{table}{\small Pairwise Potential for Dataset1}
      \label{tab:pws}
  \end{minipage}
  \begin{minipage}{0.45\textwidth}
      \small
      \centering
      \setlength{\tabcolsep}{3pt}
      
      \begin{tabular}{l|cc}
        \toprule
        \thead{pairwise potential}  &\thead{$x_2=0$} & \thead{$x_2=1$} \\
        \midrule
        \thead{$x_1=0$} & 0 & 0  \\
        \thead{$x_1=1$} & 0 & U[0,2]\\
        \bottomrule
      \end{tabular}
      \captionof{table}{\small Pairwise Potential for Dataset2,3}
      \label{tab:pws_ran}
  \end{minipage}
\end{minipage}

We then add the budget higher order potential \citep{martins2015ad} at every node; these potentials allow at most $k$ of the 8 variables that are within their scope to take the state 1. For the first two datasets, the value $k$ is set to 5; for the third dataset, it is set to a random integer in \{1,2,3,4,5,6,7,8\}. 

As a result of the constructions, different datasets have different
inputs for the FGNN; for each dataset, the inputs for each instance are the parameters
of the PGM that are not fixed. For Dataset1, only the node potentials
are not fixed, hence each input instance is a factor graph with the
randomly generated node potential added as the input node feature for
each variable node. For Dataset2, randomly generate node potentials
are used as variable node features while randomly generated pairwise
potential parameters are used as the corresponding pairwise factor
node features. Finally, for Dataset3, the variable nodes, the pairwise
factor nodes and the high order factor nodes all have corresponding
input features.

%As a result of the constructions, different datasets have different
%inputs for the FGNN; the inputs for each instance are the parameters
%of the PGM that are not fixed. For all dataset, 
%the randomly generated unary potentials are used as input
%feature for variable node. For pairwise and higher order factors, if the potential is
%fixed then empty feature will be used in the corresponding factor
%node, otherwise the randomly generated potential parameter will be
%used as input feature of the corresponding node.
% In the implementation of the model, in the input FMPNN layer, if the
% potential of a factor is random generated

\section{Extra Information On Point Cloud Segmentation}

In the point cloud segmentation experiment, there are various factors which may affects the final performance. One of the most critical part is data sampling. In both Shapenet dataset and S3DIS dataset, it is required to sample a point cloud from either a CAD model or a indoor scene block. Thus for fair comparison, all the methods are trained on the dataset sampled from original Shapenet dataset and S3Dis dataset by \citet{qi2017pointnet} \etal, and following \citet{qi2017pointnet} and \citet{dgcnn}, we do not apply any data augmentation during training. When training on Shapenet dataset, there is an official train/val/test split. We do training on the training set, and do validation after training on one epoch, and then use the model with best validation performance for evaluation on test set. For S3DIS dataset, as we are doing 6-fold cross validation, we simply run 100 epochs for each fold and do performance evaluation on the model from last epoch. The detailed comparison on the IoU of each classes are in Table \ref{tab:det_res_iou}, where for PointNet and EdgeConv, we directly use the results from \citet{qi2017pointnet,dgcnn} since we are following exactly their experiment protocal. \citet{dgcnn} did not provide the detailed IoU of each class in their paper. For PointCNN\citep{li2018pointcnn}, we rerun the experiment with exactly the same protocal as others.

\begin{table}[htbp]
  \footnotesize
  \setlength{\tabcolsep}{1.5pt}

  \centering
  \begin{tabular}[t]{lccccccccccccccc}
    \toprule
    Method
    & OA& mIoU& ceiling& floor& wall& beam& column& window& door&
                                                                  table&
                                                                         chair& sofa& bookcase& board& clutter\\
    \midrule
    PointNet &  78.5 & 47.6 & 88.0 & 88.7 & 69.3 & 42.4 & 23.1 & 47.5 &
                                                                     51.6
                                                          & 54.1 &
                                                                   42.0
                                                                       &
    9.6 & 38.2 & 29.4 & 35.2\\
    EdgeConv & 84.4 & 56.1 & -- & -- & -- & -- & -- & -- & -- & -- & -- & -- & -- & -- & --  \\
    PointCNN & 84.5 & 57.3   & 92.0 & 93.2 & 76.0 & 46.1 & 23.6 & 43.8 & 56.2 & 67.5 & 64.5 & 30.0 & 52.1 & 49.0 & 50.8 \\
    Ours & 85.5 & 60.0 & 93.0 & 95.3 & 78.3 & 59.8 & 38.3 & 55.4 &
                                                                   61.2
                                                                       &
    64.5 & 57.7& 26.7& 50.0 & 49.1 & 50.3 \\
    \bottomrule
  \end{tabular}
  \caption{\small Detailed 3D semantic results on S3DIS in overall accuracy
    (OA, \%), micro-averaged IoU (mIoU, \%) and per-class IoU (\%). The results of PointNet and EdgeConv are directly taken from their paper, and the EdgeConv \citep{dgcnn} did not provide detailed IoU of each classes.}
  \label{tab:det_res_iou}
\end{table}

\end{document}